\documentclass[11pt]{article}

\usepackage[final]{acl}
\usepackage{booktabs}
\usepackage{bbding}
\usepackage{pifont}
\usepackage{times}
\usepackage{amsthm}
\newtheorem{proposition}{Proposition}
\usepackage{amsmath}
\usepackage{latexsym}
\usepackage{lipsum}
\usepackage{arydshln} 
\usepackage[dvipsnames]{xcolor}
\usepackage{amsfonts}
\usepackage{colortbl}
\usepackage{booktabs}
\usepackage[T1]{fontenc}
\definecolor{col_color}{HTML}{CEEDFB}
\definecolor{row_color}{HTML}{DEE6FC}
\usepackage[utf8]{inputenc}

\usepackage{microtype}
\usepackage{multicol}
\usepackage{multirow}
\usepackage{enumitem}
\usepackage{inconsolata}

\usepackage{graphicx}
\usepackage{xspace}

\newcommand{\modelname}{HAPO\xspace}
\title{Where Hindsight Credit Can Reside: A Signed-Capacity View of Token Updates in RLVR}

\author{Yuhang He$^{1}$, Haodong Wu$^{1}$, Siyi Liu$^{1}$, Hongyu Ge$^{1}$, Hange Zhou$^{1}$, Keyi Wu$^{1}$ \\
{\bf  Zhuo Zheng$^{2}$, Qihong Lin$^{2}$, Zixin Zhong$^{1}$, Yongqi Zhang$^{1}$\thanks{\ \ Corresponding Authors}} \\
  $^{1}$ The Hong Kong University of Science and Technology (Guangzhou) \\
  $^{2}$ Huawei Technologies Ltd. \\
  {\tt \{yhe534,hwu315\}@connect.hkust-gz.edu.cn, yongqizhang@hkust-gz.edu.cn}\\
}

\begin{document}
\maketitle
\begin{abstract}
  Reinforcement Learning with Verifiable Rewards (RLVR) improves the reasoning ability of Large Language Models (LLMs), but sparse outcome rewards make token-level credit assignment difficult.
  We study token-level credit as a reward-conditioned shift from the behavior policy to a hindsight posterior.
  In autoregressive RLVR, this shift can be expressed through Conditional Mutual Information (CMI), which shows that token entropy upper-bounds possible hindsight credit.
  Entropy, however, indicates capacity rather than update direction, so we introduce the Four Quadrant Decomposition to separate updates by reward polarity and token entropy.
  Controlled interventions show that these two factors jointly shape token updates.
  Sustained reasoning gains concentrate in signed high-entropy quadrants, whereas low-entropy updates saturate quickly.
  Based on this analysis, we propose Hindsight-Aware Policy Optimization (\modelname), a sign-preserving modification to GRPO that performs capacity-guided advantage reallocation.
  Experiments on mathematical reasoning benchmarks in two model settings show that \modelname achieves competitive performance among entropy-aware baselines.
\end{abstract}
\section{Introduction}
Reinforcement Learning with Verifiable Rewards (RLVR) improves reasoning in Large Language
Models (LLMs) by optimizing against outcome rewards from
self-generated responses~\cite{Guo2025Deepseek,lambert2024tulu,DBLP:journals/corr/abs-2110-14168,
DBLP:conf/nips/HendrycksBKABTS21,DBLP:conf/iclr/JimenezYWYPPN24},
bypassing the need for dense step-level supervision~\cite{DBLP:conf/icml/ChuZYTXSLL025}.
In practice, methods such as
GRPO~\cite{DBLP:journals/corr/abs-2402-03300} broadcast a single sequence-level reward
to every token in a trajectory, giving rise to a credit assignment problem:
a trajectory's outcome may hinge on a few pivotal decisions, yet most tokens are routine continuations that receive identical credit~\cite{Minsky1995StepsTA,DBLP:conf/iclr/LightmanKBEBLLS24}. Thus, RLVR requires a token-level view of assigning outcome feedback.

\begin{figure*}[t]
  \centering
  \includegraphics[width=1\linewidth]{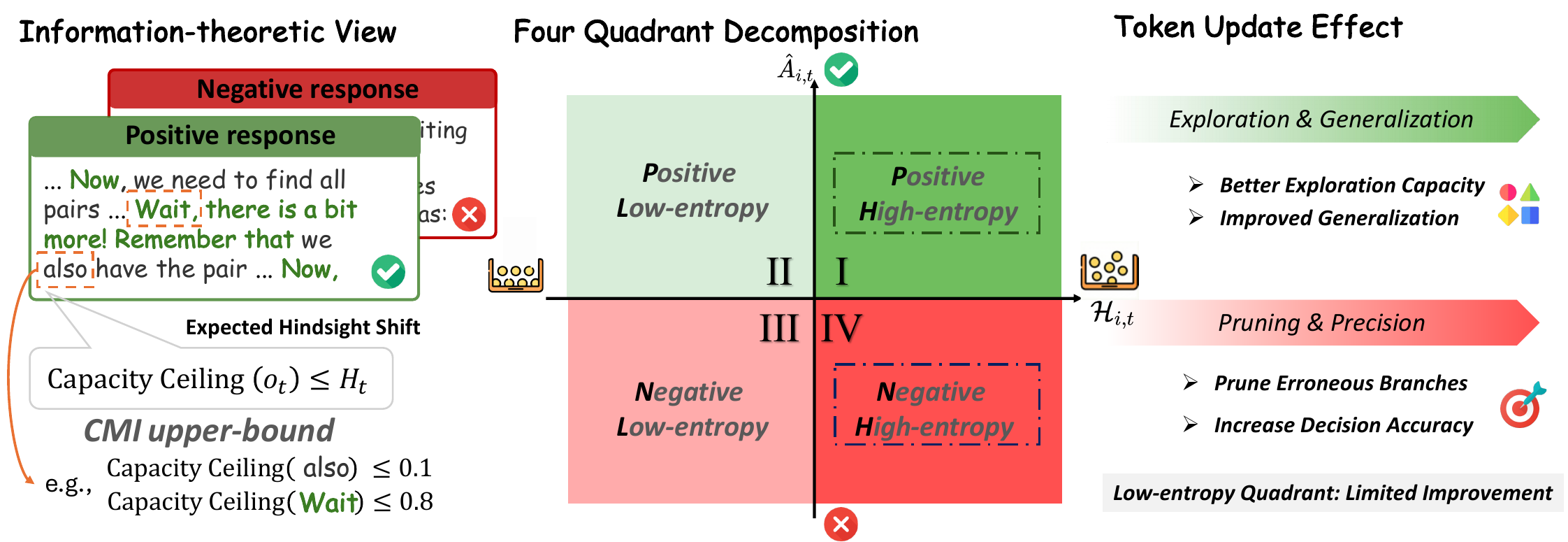}
  \caption{Tokens in \textbf{Bold} exhibit high entropy. From an information-theoretic view, high-entropy and low-entropy tokens differ in how much outcome information they can carry.
  We decompose token updates in RLVR into four quadrants to study the mechanism of token-level credit assignment. See Appendix~\ref{app:example_output} for full entropy examples.}
  \label{fig:intro_pic}
\end{figure*}

Recent work studies two observable axes for this problem: \textbf{reward polarity}, which redistributes credit by positive and negative samples to decide whether sampled tokens should be reinforced or discouraged~\cite{DBLP:conf/nips/YuanYTWHH23,Zhu2025TheSE}, and \textbf{token entropy}, which identifies uncertain tokens where the policy has more room to change~\cite{DBLP:journals/corr/abs-2506-01939,DBLP:journals/corr/abs-2506-14758}.
These views are useful but incomplete.
Polarity gives the update direction but lacks within-trajectory selectivity, whereas entropy is the reverse. In other words, high-entropy tokens in successful and failed trajectories should receive different corrections.
This leaves a sharper question---\textit{given the outcome, how to adjust earlier token probabilities relative to the behavior policy?}

We answer this from a \textbf{hindsight} perspective since the reward is observed only after generation.
\textbf{Hindsight Credit Assignment} (HCA) formalizes the ideal learning signal as a reward-conditioned posterior-prior shift: increase the probability of a token in proportion to how much conditioning on the final outcome makes it more likely, and decrease it if vice versa~\cite{Harutyunyan2019Hindsight,andrychowicz2017hindsight}.
Directly estimating this hindsight posterior is infeasible in autoregressive LLMs without counterfactual rollouts or an auxiliary posterior model.
We therefore use HCA as an analytical target. Under this view, reward polarity provides a coarse correction sign, and entropy indicates where the behavior policy has enough distributional capacity for a non-trivial correction.

To make this target analyzable, we connect hindsight credit to \textbf{Conditional Mutual Information} (CMI).
The CMI between the next-token and reward characterizes the expected posterior-prior shift.
Still this quantity is intractable, but it is upper-bounded by token entropy.
Entropy thus acts as a computable capacity ceiling: low-entropy tokens can support only small expected hindsight shifts, while high-entropy tokens flag candidate positions for larger distribution shifts.
Because this bound is one-sided, we then propose the \textbf{Four Quadrant Decomposition} as a diagnostic framework that factorizes token updates along reward polarity and entropy (Figure~\ref{fig:intro_pic}).
It tests quadrant-isolated variants to locate useful update effects predicted by this hindsight-capacity view.
The interventions show that low-entropy updates can help early optimization but quickly saturate, whereas signed high-entropy updates produce more persistent gains by consolidating successful uncertain branches and pruning failed ones.

These diagnostics motivate \textbf{Hindsight-Aware Policy Optimization} (\textbf{HAPO}), a lightweight GRPO modification that emphasizes capacity-guided credit assignment.
In effect, HAPO down-weights low-entropy updates and redirects uniform outcome toward signed high-entropy positions, where hindsight credit can plausibly reside.
On math reasoning and out-of-domain benchmarks across two models, \modelname achieves strong aggregate performance and competitive per-benchmark results.
Our contributions are summarized as follows:

\begin{itemize}[leftmargin=*, nosep]
  \item We formulate token-level credit in RLVR from a hindsight perspective and instantiate it with CMI, establishing entropy as a computable capacity ceiling for possible posterior-prior shifts.

  \item We introduce the \textbf{Four Quadrant Decomposition}, a diagnostic framework that factorizes token updates by reward polarity and token entropy to identify where sustained reasoning improvements concentrate.

  \item Based on this theoretical and empirical analysis, we propose \textbf{\modelname}, a capacity-guided method that achieves competitive performance.




\end{itemize}

\section{Preliminaries}
\subsection{Reinforcement Learning with Verifiable Rewards}

Formally, we consider an LLM parameterized by $\theta$. 
Given a dataset of query prompts $Q$, standard RLVR learns a policy $\pi_\theta$ to minimize the objective:
\begin{equation}
\mathcal{L}_\mathrm{RLVR}(\theta) = - \mathbb{E}_{q \sim Q, o \sim \pi_\theta(\cdot|q)} [r(o, q)],
\end{equation}
where $r(o, q)$ is a verifiable reward that evaluates the correctness of the response $o$ given the query $q$.
To enhance training stability and efficiency, GRPO samples a group of responses $\{o_i\}_{i=1}^G$ for each query from the old behavior policy $\pi_{\theta_{\mathrm{old}}}$ and optimizes the following surrogate objective loss~\cite{DBLP:journals/corr/abs-2402-03300,DBLP:journals/corr/abs-2503-14476}:
\begin{equation}
\hspace{-0.2cm}\resizebox{0.92\hsize}{!}{$\displaystyle
\begin{aligned}
\mathcal{L}_{\mathrm{GRPO}}(\theta) ={}&- \mathbb{E}_{q\sim Q,\{o_i\}_{i=1}^G\sim\pi_{\theta_{\mathrm{old}}}(\cdot|q)} \\
&\Bigg[\!\frac{1}{\sum_{i=1}^G \!|o_i|}\!\sum_{i=1}^G\!\sum_{t=1}^{|o_i|} 
\min\!\Big(\rho_{i,t}\cdot\hat{A}_{i,t},\\
&\!\operatorname{clip}(\rho_{i,t}, \!1\!-\!\epsilon_{\text{low}},\!1\!+\!\epsilon_{\text{high}})\cdot \hat{A}_{i,t}\Big)\Bigg],
\end{aligned}$}
\label{eq:grpo}
\end{equation}
where $\hat{A}_{i,t}=\frac{r_i-\text{mean}(\{r_i\}_{i=1}^G)}{\text{std}(\{r_i\}_{i=1}^G)}$ represents the group advantage, normalizing the outcome reward $r_i$.
$\rho_{i,t}=\frac{\pi_\theta(o_{i,t}|q,o_{i,<t})}{\pi_{\theta_{\mathrm{old}}}(o_{i,t}|q,o_{i,<t})}$ denotes the importance sampling ratio at step $t$ of trajectory $o_i$. $\epsilon_\text{{high}/{low}}$ indicates the clipping range of importance ratios.
\subsection{Hindsight Credit Assignment}
\label{sec:prelim_hca}


Hindsight Credit Assignment (HCA) offers a backward view of token updates in RLVR~\cite{Harutyunyan2019Hindsight}.
At decoding step $t$, the model observes the prefix-conditioned state $s_{i,t} := (q,o_{i,<t})$ and samples the token $o_{i,t}\sim \pi_\theta(\cdot| s_{i,t})$.
HCA conceptualizes learning the reward-conditioned hindsight posterior over the next token via the posterior-prior ratio. This ratio $\rho^{\mathrm{hs}}$ is the ideal HCA target for token-level credit:
\begin{equation}
\rho^{\mathrm{hs}}
=
\frac{h_{\pi_\theta}(o_{i,t}\mid s_{i,t}, r)}
{\pi_\theta(o_{i,t}\mid s_{i,t})}.
\label{eq:hca_ratio}
\end{equation}
where $h_{\pi_\theta}$ is the hindsight posterior distribution over the next token.
If $\rho^{\mathrm{hs}}>1$, it makes the sampled token more likely than under the behavior policy $\pi_\theta$, and vice versa.
However, precise $h_{\pi_\theta}$ estimation is difficult in high-dimensional autoregressive spaces, as it would require counterfactual future rollouts for alternative next tokens from the same prefix or an additional model for the outcome-conditioned posterior~\cite{DBLP:conf/icml/PooleOOAT19,DBLP:conf/aaai/FoersterFANW18,Tan2026Hindsight}.
We therefore use HCA as an analytical target, and next reinterpret $\rho^{\mathrm{hs}}$ through CMI.

\subsection{Policy Entropy}

Policy entropy quantifies the uncertainty of the model's predictive distribution over the vocabulary $\mathcal{V}$~\cite{DBLP:conf/iclr/KuhnGF23}. 
Given a query $q$ and a partial trajectory $o_{i,<t}$, the token entropy is defined as the Shannon entropy~\cite{DBLP:journals/bstj/Shannon48a} of the output distribution at step $t$:
$\mathcal{H}_{i,t} = -\!\sum_{v \in \mathcal{V}} \pi_\theta(v \mid q, o_{i,{<t}}) \log \pi_\theta(v \mid q, o_{i,{<t}})$.
The policy entropy for a trajectory $o_i$ is approximated as the average token entropy.
Prior RLVR methods use entropy to identify uncertain or forking positions and to shape reward signals~\cite{DBLP:journals/corr/abs-2506-01939,DBLP:journals/corr/abs-2506-14758,chen2025beyond}.
While these methods reshape reward, the underlying mechanisms of credit assignment remain underexplored.

\section{Entropy-Bounded Hindsight Credit}
\label{sec:info_theory}

Because the outcome reward is observed only after generation, token-level credit in RLVR is inherently a hindsight question: how would the next-token distribution at a prefix change after conditioning on the final outcome?
Here, the HCA posterior-prior shift serves as the analytical target, and we adapt the information-theoretic credit view of~\citet{arumugam2021information} to the autoregressive LLM setting.

\subsection{HCA to CMI}
Recall from Section~\ref{sec:prelim_hca} that HCA defines the posterior-prior ratio $\rho^{\mathrm{hs}}$, which serves as the ideal but intractable pointwise hindsight credit target.
CMI makes this target analyzable by aggregating pointwise hindsight evidence into a distribution-level shift. It measures how much the next-token distribution would change once the trajectory outcome is revealed.


\begin{proposition}[HCA--CMI Connection]
\label{prop:hindsight_cmi}
For any prefix state $s_{i,t}$, let $h_{\pi_\theta}(\cdot\mid s_{i,t},r)$ denote the reward-conditioned hindsight distribution over the random next token $\textbf{o}_{i,t}$. Then
\begin{equation*}
\resizebox{\linewidth}{!}{$\displaystyle
  I(\textbf{o}_{i,t};\, r \mid s_{i,t})
  =
  \mathbb{E}
  \left[
  \mathrm{KL}\!\left(h_{\pi_\theta}(\cdot\mid s_{i,t},r)\,\|\,\pi_\theta(\cdot\mid s_{i,t})\right)
  \right].
$}
\end{equation*}
\end{proposition}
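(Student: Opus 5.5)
The plan is to prove the identity directly from the definition of conditional mutual information, using Bayes' rule to identify the hindsight posterior. Fix the prefix state $s_{i,t}$. The joint law of the next token $\textbf{o}_{i,t}$ and the outcome $r$ given $s_{i,t}$ is generated by the rollout process: first $\textbf{o}_{i,t}\sim\pi_\theta(\cdot\mid s_{i,t})$, then the continuation is sampled from $\pi_\theta$ and scored by the verifier. This gives
\[
p(v,r\mid s_{i,t})=\pi_\theta(v\mid s_{i,t})\,p(r\mid s_{i,t},v).
\]
The marginal of $r$ is $p(r\mid s_{i,t})=\sum_{v}\pi_\theta(v\mid s_{i,t})\,p(r\mid s_{i,t},v)$. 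The first step is to make explicit that the hindsight distribution $h_{\pi_\theta}(v\mid s_{i,t},r)$ is exactly the Bayes posterior $p(v\mid s_{i,t},r)=\pi_\theta(v\mid s_{i,t})\,p(r\mid s_{i,t},v)/p(r\mid s_{i,t})$. This is how HCA defines it in Section~\ref{sec:prelim_hca}, and it is well defined whenever $p(r\mid s_{i,t})>0$.

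Next, write the conditional mutual information as a KL divergence between the joint and the product of marginals:
\[
I(\textbf{o}_{i,t};r\mid s_{i,t})=\sum_{r}\sum_{v}p(v,r\mid s_{i,t})\log\frac{p(v,r\mid s_{i,t})}{\pi_\theta(v\mid s_{i,t})\,p(r\mid s_{i,t})}.
\]
Factor the joint as $p(v,r\mid s_{i,t})=p(r\mid s_{i,t})\,h_{\pi_\theta}(v\mid s_{i,t},r)$. The argument of the logarithm then reduces to the pointwise posterior-prior ratio $\rho^{\mathrm{hs}}=h_{\pi_\theta}(v\mid s_{i,t},r)/\pi_\theta(v\mid s_{i,t})$ from Eq.~\eqref{eq:hca_ratio}. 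Taking the inner sum over $v$ for fixed $r$ gives exactly $\mathrm{KL}\big(h_{\pi_\theta}(\cdot\mid s_{i,t},r)\,\|\,\pi_\theta(\cdot\mid s_{i,t})\big)$. The outer sum weighted by $p(r\mid s_{i,t})$ is then the expectation in the statement, taken over $r\sim p(\cdot\mid s_{i,t})$.

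The only real obstacle is measure-theoretic and notational rather than computational, and there are two points to handle. First, I must state clearly that the expectation is over the outcome marginal induced by rolling out $\pi_\theta$ from $s_{i,t}$. Second, zero-probability cases must be handled: outcomes with $p(r\mid s_{i,t})=0$ drop out of the sum, and tokens with $\pi_\theta(v\mid s_{i,t})=0$ also have zero posterior, so the usual convention $0\log 0=0$ applies.

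Since the vocabulary is finite and $r$ is binary or finitely valued, all sums are finite and no integrability issues arise. If a continuous reward is allowed, the same argument works with densities and the chain rule for KL divergence. Combining the two steps yields the stated equality.
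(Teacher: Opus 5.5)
Your proposal is correct and takes essentially the same route as the paper: you identify $h_{\pi_\theta}(\cdot\mid s_{i,t},r)$ with the Bayes posterior $p(\textbf{o}_{i,t}\mid s_{i,t},r)$ and then use the expected-KL form of conditional mutual information. The only difference is that the paper cites that KL form from \citet{arumugam2021information}, whereas you derive it directly from the joint-versus-product-of-marginals definition and handle the zero-probability conventions explicitly.
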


Proposition~\ref{prop:hindsight_cmi} instantiates CMI as the expected distribution-level counterpart of the HCA posterior-prior shift.
Equivalently, CMI is the expected log HCA ratio over outcomes and hindsight-conditioned token choices, so it aggregates rather than replaces the pointwise target.
Exact computation would require the hindsight distribution $h_{\pi_\theta}$, which is not directly available as discussed in Section~\ref{sec:prelim_hca}.
This connection leads to a tractable consequence:

\begin{proposition}[Entropy Capacity Bound]
\label{prop:credit_bound}
For any prefix state $s_{i,t}$, the information contribution of the next token to the outcome $r$ satisfies:
\begin{equation*}
  I(\textbf{o}_{i,t};\, r | s_{i,t}) \leq \mathcal{H}_{i,t}.
  \label{eq:cmi_bound}
\end{equation*}
\end{proposition}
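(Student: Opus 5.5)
The plan is to write the conditional mutual information as a difference of conditional entropies and then use nonnegativity of entropy for a discrete variable. Fix a prefix state $s_{i,t}$. Conditioned on $s_{i,t}$, the next token $\textbf{o}_{i,t}$ is a discrete random variable over the finite vocabulary $\mathcal{V}$ with law $\pi_\theta(\cdot\mid s_{i,t})$. The outcome $r$ is a random variable whose joint law with $\textbf{o}_{i,t}$ is induced by sampling the token and then rolling out the continuation under $\pi_\theta$. This is exactly the joint law whose $r$-conditional is the hindsight distribution $h_{\pi_\theta}(\cdot\mid s_{i,t},r)$ in Proposition~\ref{prop:hindsight_cmi}.

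The first step is the identity
\begin{equation*}
I(\textbf{o}_{i,t}; r\mid s_{i,t}) = H(\textbf{o}_{i,t}\mid s_{i,t}) - H(\textbf{o}_{i,t}\mid s_{i,t}, r).
\end{equation*}
I would derive it directly from the KL form in Proposition~\ref{prop:hindsight_cmi}. Expanding $\mathbb{E}_r\,\mathrm{KL}(h\,\|\,\pi_\theta)$ as
\begin{equation*}
\mathbb{E}_r \sum_v h(v\mid s_{i,t},r)\bigl[\log h(v\mid s_{i,t},r) - \log \pi_\theta(v\mid s_{i,t})\bigr]
\end{equation*}
splits it into two terms. The first term is $-H(\textbf{o}_{i,t}\mid s_{i,t},r)$. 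For the second term, marginalization gives $\mathbb{E}_r\, h(v\mid s_{i,t},r)=\pi_\theta(v\mid s_{i,t})$, so $-\mathbb{E}_r\sum_v h\log\pi_\theta$ becomes $-\sum_v \pi_\theta(v\mid s_{i,t})\log\pi_\theta(v\mid s_{i,t})$. By the definition in the preliminaries, this equals $\mathcal{H}_{i,t}$.

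The second step uses the fact that $\textbf{o}_{i,t}$ is discrete. Each hindsight distribution $h(\cdot\mid s_{i,t},r)$ is then a probability mass function on $\mathcal{V}$, so its Shannon entropy is nonnegative, and averaging over $r$ keeps $H(\textbf{o}_{i,t}\mid s_{i,t},r)\ge 0$. Dropping this term gives $I(\textbf{o}_{i,t};r\mid s_{i,t})\le \mathcal{H}_{i,t}$. Equality holds exactly when the outcome determines the token almost surely. I would also note the trivial corollary $\mathcal{H}_{i,t}\le\log|\mathcal{V}|$.

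The main obstacle is not the inequality but making the objects well defined. If $r$ is continuous, or more generally not discrete, the conditional entropy of $\textbf{o}_{i,t}$ given $r$ still only requires $\textbf{o}_{i,t}$ to be discrete, so the argument survives. What needs care is that the expectation in Proposition~\ref{prop:hindsight_cmi} is over the correct marginal of $r$ given $s_{i,t}$. Otherwise the marginalization identity $\mathbb{E}_r h=\pi_\theta$ fails, and the cross term does not collapse to $\mathcal{H}_{i,t}$. For verifiable binary rewards this is immediate, since Bayes' rule gives $h(v\mid s,r)=\pi_\theta(v\mid s)P(r\mid s,v)/P(r\mid s)$, and summing over $r$ recovers $\pi_\theta(v\mid s)$.
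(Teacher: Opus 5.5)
Your proposal is correct and follows essentially the same route as the paper: expand the KL form from Proposition~\ref{prop:hindsight_cmi}, use the marginalization identity $\mathbb{E}_{r\mid s_{i,t}}\, h_{\pi_\theta}(v\mid s_{i,t},r)=\pi_\theta(v\mid s_{i,t})$ to collapse the cross term to $\mathcal{H}_{i,t}$, and then drop the nonnegative conditional entropy $H(\textbf{o}_{i,t}\mid r,s_{i,t})$. The paper states the same equality condition, that the token is a deterministic function of $(r,s_{i,t})$. Your extra remarks add care but no new idea: taking the expectation over the correct marginal of $r$, handling non-discrete $r$, and the corollary $\mathcal{H}_{i,t}\le\log|\mathcal{V}|$.
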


That is, \textbf{$\mathcal{H}_{i,t}$ upper-bounds the token's reward-relevant information in expectation}.
In this sense, entropy is not an estimator of token credit itself, but a computable ceiling on where non-trivial hindsight credit can reside.
Low-entropy tokens (syntax, routine algebra, common suffixes) have very low $\mathcal{H}_{i,t}$ and therefore cannot support a large expected hindsight shift.
High-entropy positions, by contrast, merely mark candidate branching positions where such a shift is possible.
The full derivation and interpretation are in Appendix~\ref{app:cmi_derivation}.

\subsection{Signed-Capacity Predictions}
\label{sec:grpo_bias}

In standard GRPO, the advantage $\hat{A}_{i,t}$ is constant for all tokens within a trajectory.
This broadcasts the same outcome signal to tokens with very different hindsight capacity---routine tokens with little room for reward-conditioned shift and branching positions where such shifts are more plausible.
Proposition~\ref{prop:credit_bound} therefore motivates comparing this uniform update with the token-level geometry of reasoning trajectories.
In this view, the issue is not only that uniform assignment is coarse.
It also ignores where a reward-conditioned posterior-prior shift can plausibly reside.

Entropy alone is still unsigned: it identifies where a non-trivial shift is possible, but not whether the sampled token should be reinforced or suppressed.
The sign of the group advantage supplies this coarse reward polarity, while entropy supplies the capacity axis.
Because this condition is necessary rather than sufficient, we treat the implications below as {empirical predictions} to be probed by quadrant-restricted interventions before proposing a method.
This gives a signed-capacity hypothesis for controlled quadrant-isolated training:

\vspace{2pt}
\noindent\textbf{Prediction~1} (Low-entropy saturation). \textit{Training exclusively on low-entropy tokens delivers at most modest or short-lived gains, since they have little capacity for reward-relevant information $I(\textbf{o}_{i,t}; r | s_{i,t})$.}

\vspace{2pt}
\noindent\textbf{Prediction~2} (High-entropy candidate effects). \textit{Training on high-entropy positions is a plausible source of sustained improvements because it permits larger information $I(\textbf{o}_{i,t}; r | s_{i,t})$.}

\vspace{2pt}
\noindent\textbf{Prediction~3} (Polarity--entropy interaction). \textit{Because the hindsight distribution $h_{\pi_\theta}$ differs between successful and failed outcomes, high-entropy updates should exhibit asymmetric effects across reward polarity rather than a uniform response.}

\vspace{2pt}

\begin{figure*}[ht]
  \includegraphics[width=\linewidth]{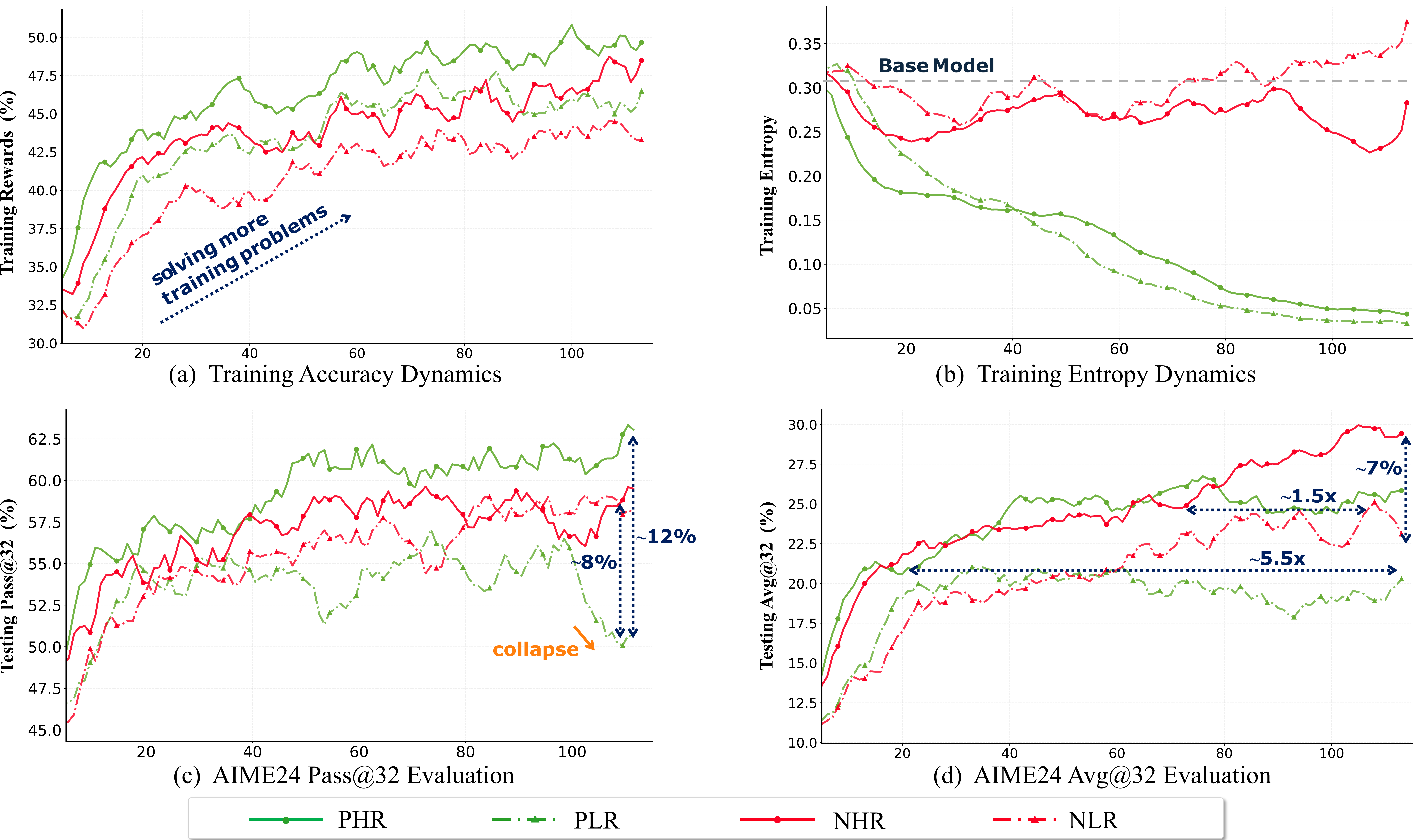}
  \caption{
  Training dynamics and evaluation for four-quadrant diagnostic interventions.  (a) Training solve rate. (b) Training entropy. (c) Pass@32 on AIME24. (d) Avg@32 on AIME24. The x-axis denotes training steps.
  }
  \label{fig:decompose}
\end{figure*}

\section{Four Quadrant Decomposition}
\label{sec:decompose_token_effect}
Section~\ref{sec:info_theory} explains why entropy constrains where hindsight credit can plausibly reside.
Controlled interventions then ask where signed GRPO updates produce durable behavioral gains.
Building on~\citet{Zhu2025TheSE}, who decompose RLVR into positive and negative paradigms based on reward polarity,
we cross reward polarity with token entropy, yielding a four-quadrant experimental framework.
By restricting gradient updates to specific quadrants, this framework yields diagnostic readouts of where signed update effects are concentrated without treating the observed metrics as direct measurements of latent token credit.

\subsection{Quadrant Setup}
Prior work links token entropy to exploratory behaviors~\cite{DBLP:journals/corr/abs-2506-01939,DBLP:journals/corr/abs-2506-14758}, but how tokens of different entropy levels influence model optimization and generalization remains underexplored.

These observations motivate us to decompose token reinforcement in GRPO into four quadrants based on two criteria:
(1) \textbf{Reward polarity (positive / negative)}: whether a token belongs to a positive or negative trajectory, determined by the outcome reward $r_i$;
(2) \textbf{Token entropy (high / low)}: whether the token entropy $\mathcal{H}_{i,t}$ is relatively higher or lower than the average entropy of the group trajectories $\{\mathcal{H}_{i,t}\}_{i=1}^G$.
In this diagnostic, entropy provides the capacity axis, while reward polarity supplies the signed correction direction.
Using these criteria, we define four diagnostic quadrants for token reinforcement as shown in Figure~\ref{fig:intro_pic}:
\begin{itemize}[leftmargin=*, nosep]
    \item \textbf{PHR}: Positive High-entropy Reinforcement,
    \item \textbf{PLR}: Positive Low-entropy Reinforcement,
    \item \textbf{NHR}: Negative High-entropy Reinforcement,
    \item \textbf{NLR}: Negative Low-entropy Reinforcement,
\end{itemize}


To investigate the underlying learning dynamics of GRPO, we formulate four corresponding sub-objectives, each restricting gradient updates exclusively to tokens within one quadrant.
The diagnostic is instantiated with \texttt{Qwen2.5-Math-7B} and evaluated on AIME24 using $\text{Avg@}32$ and $\text{Pass@}32$ as exploitation- and diversity-oriented behavioral readouts.
Appendices~\ref{app:training_hyperparameters} and~\ref{app:evaluation_setup} report the detailed training and evaluation setup.

\subsection{Findings of Optimization Dynamics}
Figure~\ref{fig:decompose} tracks training dynamics across all four quadrants.
The key diagnostic findings are:

\paragraph{Finding 1: Low-entropy updates optimize early but have limited hindsight capacity.}
PLR and NLR can accelerate early optimization because they act on segments the model is already confident about.
However, their gains saturate quickly and can even become detrimental, as seen most clearly in PLR's entropy collapse and degraded diversity in Figures~\hyperref[fig:decompose]{2c} and~\hyperref[fig:decompose]{2d}.
PLR reaches the same $\text{Avg@}32$ as PHR only after $5.5\times$ more steps and trails PHR by 12\% on $\text{Pass@}32$.
Test-time performance in Appendix~\ref{app:tt_performance} further documents this phenomenon, where PLR underperforms other quadrants and even the base model in $\text{Pass@}256$.

\paragraph{Finding 2: PHR consolidates successful high-entropy branches.}
PHR and PLR both improve training-time solve rate in Figure~\hyperref[fig:decompose]{2a}, but they diverge at evaluation in Figure~\hyperref[fig:decompose]{2c-2d}.
PHR continues to improve both $\text{Avg@}32$ and $\text{Pass@}32$, indicating that successful high-entropy branches offer reinforcing hindsight evidence that transfers beyond the sampled rollouts.
Under the capacity view, these positions are not credited merely because they are uncertain; positive reward polarity supplies the direction, while high entropy leaves enough room for a non-trivial posterior-prior shift.

\paragraph{Finding 3: NHR prunes failed high-entropy branches.}
Both negative quadrants preserve more entropy than positive-quadrant training and improve evaluation metrics, consistent with prior observations that negative reinforcement can be effective in RLVR~\cite{Zhu2025TheSE} across Figures~\hyperref[fig:decompose]{2}.
The key difference is that NHR converges about $1.5\times$ faster than NLR and attains stronger Avg@$32$ performance in Figure~\hyperref[fig:decompose]{2d}.
Empirically, the benefit of negative feedback is concentrated in NHR rather than NLR, pointing to suppressive or redirecting effects at failed high-entropy branches rather than uniform penalties on routine low-entropy tokens.

\paragraph{Hindsight-Capacity Readout.}
Together, these interventions support the signed-capacity predictions from Section~\ref{sec:grpo_bias}: low-entropy quadrants (PLR/NLR) behave as low-capacity regions with limited or short-lived gains, while signed high-entropy quadrants (PHR/NHR) show stronger sustained improvements after reward polarity resolves the update direction.
Appendix~\ref{app:decompose_exp} gives additional training and test-time evidence.
We next turn to the gradient level to connect this pattern to GRPO's update rule.

\subsection{Gradient-Level Analysis of Token Updates}
\label{sec:gradient_analysis}

For notational simplicity, we fix a trajectory and drop the trajectory index $i$.
Consider the simplified GRPO loss for a sampled token $o_t$ at position $t$.
Let $z_v$ denote the logit of token $v \in \mathcal{V}$ and $\pi_{o_t} = \pi_\theta(o_t | q, o_{<t})$.
The standard softmax derivation yields the token-level gradient descent (full derivation in Appendix~\ref{app:gradient_analysis}):
\begin{equation}
  -\nabla_{z_v} \mathcal{L}_t \approx \begin{cases}
  \hat{A}_t \cdot (1 - \pi_{o_t}), & \text{if} \ v = o_t \\
  - \hat{A}_t \cdot \pi_v, & \text{if} \ v \neq o_t
\end{cases}
\label{eq:main_gradient}
\end{equation}
The critical factor is $(1 - \pi_{o_t})$. In expectation, $\mathbb{E}_{o_t \sim \pi}[1-\pi_{o_t}] = 1-\sum_v \pi_v^2$, so less concentrated high-entropy regimes tend to be more gradient-actionable on average. The CMI bound adds a complementary filter: only positions with sufficient entropy can support a sizable reward-conditioned posterior-prior shift. Thus, high-entropy updates are plausible candidate locations for hindsight credit, while low-entropy updates have limited hindsight capacity. By contrast, GRPO still assigns the same advantage to both. This helps explain the stronger PHR/NHR readouts and the more limited role of PLR/NLR in Section~\ref{sec:decompose_token_effect}, motivating a practical objective that preserves reward polarity while applying bounded, capacity-guided modulation.

\section{Hindsight-Aware Policy Optimization}
\label{method}

The preceding analysis motivates a bounded proxy allocation rule for GRPO.
The ideal hindsight correction depends on the reward-conditioned posterior-prior shift, but this shift is unavailable during training.
\modelname does not estimate this posterior.
Instead, it uses two observable quantities already present in GRPO: reward polarity as the correction sign and token entropy as the capacity proxy.
Viewed from the Four Quadrant Decomposition, \textbf{Hindsight-Aware Policy Optimization (\modelname)} is a soft relaxation of the diagnostic intervention.
Rather than hard-masking one quadrant, it continuously reallocates update magnitude toward signed high-entropy positions while preserving GRPO's optimization structure.


\paragraph{Entropy-Normalized Capacity Proxy.}
Proposition~\ref{prop:credit_bound} gives an upper bound on reward-relevant token information, not the exact hindsight credit.
Accordingly, \modelname uses token entropy as a stable allocation prior for how much of the trajectory-level advantage should be exposed at each position.
Let $\mu_{\mathcal{H}}$ and $\sigma_{\mathcal{H}}$ denote the mean and standard deviation of the token entropies $\{\mathcal{H}_{i,t}\}$ within the sampled group.
We define
\begin{equation}
  C_{i,t} = \text{clip}\left(\frac{\mathcal{H}_{i,t} - \mu_{\mathcal{H}}}{\sigma_{\mathcal{H}}}, -\frac{\left|\hat{A}_{i,t}\right|}{\varphi } ,\frac{\left|\hat{A}_{i,t}\right|}{\varphi }\right),
\end{equation}
where $\varphi>1$ bounds the capacity proxy, and $C_{i,t}$ is positive for above-average-entropy tokens and negative for below-average ones.

\paragraph{Sign-Preserving Advantage Shaping.}
We then incorporate this proxy into GRPO through a shaped advantage:
\begin{equation}
  \hat{A}_{i,t}^{\mathrm{h}} = \hat{A}_{i,t} + \alpha \cdot \operatorname{sign}(\hat{A}_{i,t}) \cdot \operatorname{detach}(C_{i,t}),
\end{equation}
where $\alpha$ controls the modulation strength and $\operatorname{sign}(\cdot)$ follows the reward polarity.
Equivalently, $\hat{A}_{i,t}^{\mathrm{h}}=\operatorname{sign}(\hat{A}_{i,t})(|\hat{A}_{i,t}|+\alpha\operatorname{detach}(C_{i,t}))$.
Given $\alpha \in (0,1]$ and $\varphi>1$, clipping $C_{i,t}$ by $|\hat{A}_{i,t}|/\varphi$ lets the shaped advantage preserves the sign of $\hat{A}_{i,t}$.

The resulting effect mirrors the diagnostic pattern: signed high-entropy updates are strengthened, while routine low-entropy updates are dampened.
The $\operatorname{detach}(\cdot)$ operator (i.e., stop-gradient) prevents entropy from becoming a separate optimization target. It only rescales the policy gradient.
In practice, we replace $\hat{A}_{i,t}$ in Equation~\ref{eq:grpo} with $\hat{A}_{i,t}^{\mathrm{h}}$ and keep the rest of GRPO unchanged.
Hyperparameter sensitivity, including the effect of $\alpha$, is deferred to Appendix~\ref{app:hyperparameter_full}.

\begin{table*}
  [th]
  \setlength{\tabcolsep}{2pt}
  \resizebox{1\linewidth}{!}{
  \begin{tabular}{lccccccccccccc}
    \toprule \noalign{\vskip -2pt} \multirow{2}{*}{Method}                                                                                                    & \multicolumn{2}{c}{AIME24} & \multicolumn{2}{c}{AIME25} & \multicolumn{2}{c}{AMC} & \multicolumn{2}{c}{MATH} & \multicolumn{2}{c}{Minerva} & \multicolumn{2}{c}{Olympiad} & \multicolumn{1}{c}{}    \\
    \cmidrule(lr){2-3}\cmidrule(lr){4-5}\cmidrule(lr){6-7}\cmidrule(lr){8-9}\cmidrule(lr){10-11}\cmidrule(lr){12-13}\cmidrule(lr){14-14} \noalign{\vskip -4pt} 
                                                                                                                                                              & \small{$\text{Avg@}32$}    & \small{$\text{Pass@}32$}   & \small{$\text{Avg@}32$} & \small{$\text{Pass@}32$} & \small{$\text{Avg@}32$}     & \small{$\text{Pass@}32$}     & \small{$\text{Avg@}4$} & \small{$\text{Pass@}4$} & \small{$\text{Avg@}4$} & \small{$\text{Pass@}4$} & \small{$\text{Avg@}4$} & \small{$\text{Pass@}4$} & \cellcolor{col_color}{\texttt{avg}} \\
    \hline
    \noalign{\vskip 1pt}

\rowcolor{row_color!80}\multicolumn{14}{c}{\textbf{\textit{{Classic RLVR Method}}}}                                                  \\
    \noalign{\vskip 1pt} Base-7B                                                                                                                              & 10.0                       & 39.3                       & 5.1                     & 23.5                     & 32.7                        & 85.2                         & 46.9                   & 66.3                    & 14.3                   & 25.9                    & 15.5                   & 27.1                    & \cellcolor{col_color}{20.8}         \\
    PRIME                                                                                                                                                     & 15.3                       & 54.5                       & 6.1                     & 26.3                     & 42.4                        & 90.5                         & 56.6                   & 74.1                    & 15.7                   & 37.8                    & 20.8                   & 34.7                    & \cellcolor{col_color}30.1           \\
    W-REINF                                                                                                                                                   & 31.2                       & 58.2                       & 10.1                    & 34.0                     & 58.1                        & {87.1}                       & 76.2                   & 84.5                    & 34.2                   & 51.2                    & 38.9                   & 49.3                    & \cellcolor{col_color}41.6           \\
    GRPO                                                                                                                                                      & 34.0                       & 50.8                       & 9.3                     & 25.2                     & 58.6                        & 80.3                         & 79.9                   & 85.1                    & 38.1                   & 43.4                    & 42.8                   & 51.1                    & \cellcolor{col_color}43.8           \\
    DAPO                                                                                                                                                      & 35.7                       & \underline{67.4}           & \underline{17.1}        & \textbf{40.8}            & 61.0                        & 84.8                         & 82.6                   & 88.1                    & 42.8                   & 50.4                    & 43.9                   & 52.8                    & \cellcolor{col_color}47.2           \\
    \hline
    \noalign{\vskip 1pt} \rowcolor{row_color!80}\multicolumn{14}{c}{\textbf{\textit{Policy Entropy Method}}}                                                   \\
    \noalign{\vskip 1pt} EntroReg                                                                                                                             & 34.6                       & 52.9                       & 12.6                    & 28.5                     & 61.1                        & 84.9                         & 82.7                   & 87.7                    & 43.1                   & 50.8                    & 42.8                   & 51.4                    & \cellcolor{col_color}46.2           \\
    Forking                                                                                                                                                   & 35.8                       & 58.1                       & 15.1                    & 36.1                     & \underline{63.7}            & 86.6                         & \textbf{84.1}          & \textbf{89.4}           & \underline{43.0}       & 50.1                    & \textbf{45.5}          & \textbf{54.4}           & \cellcolor{col_color}47.9           \\
    EntroAdv                                                                                                                                                  & 36.8                       & 65.0                       & 16.3                    & 35.7                     & \textbf{63.8}               & \textbf{89.4}                & 83.5                   & 88.6                    & 42.7                   & 49.3                    & 44.5                   & 53.6                    & \cellcolor{col_color}47.9           \\
    KL-Cov                                                                                                                                                    & \underline{38.9}           & 61.6                       & 13.8                    & 33.6                     & 59.2                        & 85.9                         & 81.5                   & 87.2                    & {40.9}                 & \underline{54.1}        & 44.2                   & 51.8                    & \cellcolor{col_color}46.4           \\
    \rowcolor{gray!10}\modelname                                                                                                                              & \textbf{39.8}              & \textbf{68.9}              & \textbf{17.2}           & \underline{37.6}         & 62.1                        & \underline{88.8}             & \underline{83.7}       & \underline{88.7}        & \textbf{43.9}          & \textbf{57.3}           & \underline{45.1}       & \underline{53.8}        & \cellcolor{col_color}\textbf{48.6}  \\
    \bottomrule
  \end{tabular}
  }
  \caption{Main results in percentage (\%) on mathematics reasoning benchmarks
  with \texttt{Qwen2.5-Math-7B}. The best results are in \textbf{bold}, and the
  second-best results are \underline{underlined}. All subsequent tables
  follow this format. The \texttt{avg} is
  computed by averaging the $\text{Avg@}k$ across all benchmarks. }
  \label{tab:qwen_main_results}
\end{table*}

\begin{table*}
  [th]
  \setlength{\tabcolsep}{2pt}
  \resizebox{1\linewidth}{!}{
  \begin{tabular}{lccccccccccccc}
    \toprule \noalign{\vskip -2pt} \multirow{2}{*}{Method}                                                                                 & \multicolumn{2}{c}{AIME24} & \multicolumn{2}{c}{AIME25} & \multicolumn{2}{c}{AMC} & \multicolumn{2}{c}{MATH} & \multicolumn{2}{c}{Minerva} & \multicolumn{2}{c}{Olympiad} & \multirow{2}{*}{}       \\
    \cmidrule(lr){2-3}\cmidrule(lr){4-5}\cmidrule(lr){6-7}\cmidrule(lr){8-9}\cmidrule(lr){10-11}\cmidrule(lr){12-13} \noalign{\vskip -4pt} & \small{$\text{Avg@}32$}    & \small{$\text{Pass@}32$}   & \small{$\text{Avg@}32$} & \small{$\text{Pass@}32$} & \small{$\text{Avg@}32$}     & \small{$\text{Pass@}32$}     & \small{$\text{Avg@}4$} & \small{$\text{Pass@}4$} & \small{$\text{Avg@}4$} & \small{$\text{Pass@}4$} & \small{$\text{Avg@}4$} & \small{$\text{Pass@}4$} & \cellcolor{col_color}{\texttt{avg}} \\
    \hline
    \noalign{\vskip 1pt} \rowcolor{row_color!80}\multicolumn{14}{c}{\textbf{\textit{{Classic RLVR Method}}}}                                \\
    \noalign{\vskip 1pt} Base-1.5B                                                                                                         & 32.5                       & 71.6                       & 24.3                    & 51.9                     & 69.4                        & 89.9                         & 85.7                   & 91.9                    & 37.5                   & 46.7                    & 55.2                   & 63.8                    & \cellcolor{col_color}50.7           \\
    W-REINF                                                                                                                                & 35.0                       & 66.1                       & 25.0                    & 49.3                     & 69.8                        & 91.0                         & 87.5                   & 92.6                    & 37.8                   & 37.8                    & 55.9                   & 65.2                    & \cellcolor{col_color}51.8           \\
    GRPO                                                                                                                                   & 38.1                       & 64.8                       & 27.1                    & 49.1                     & 71.4                        & 92.4                         & 88.9                   & 93.4                    & 39.4                   & {48.0}                  & 58.7                   & 67.4                    & \cellcolor{col_color}53.9           \\
    DAPO                                                                                                                                   & 40.9                       & \textbf{75.6}              & 28.6                    & 53.6                     & {73.6}                      & 90.8                         & 89.9                   & 94.0                    & 39.1                   & {48.0}                  & 58.4                   & 67.1                    & \cellcolor{col_color}55.1           \\
    \hline
    \noalign{\vskip 1pt} \rowcolor{row_color!80}\multicolumn{14}{c}{\textbf{\textit{Policy Entropy Method}}}                                \\
    \noalign{\vskip 1pt} EntroReg                                                                                                          & 34.6                       & 70.4                       & 26.4                    & 49.2                     & 70.2                        & 90.6                         & 86.8                   & {92.5}                  & 37.6                   & 47.5                    & 56.2                   & 67.0                    & \cellcolor{col_color}51.9           \\
    Forking                                                                                                                                & 38.1                       & {71.7}                     & 28.1                    & \textbf{55.5}            & 71.0                        & 90.7                         & 87.6                   & 93.1                    & 38.7                   & 48.6                    & 59.1                   & 68.4                    & \cellcolor{col_color}53.7           \\
    EntroAdv                                                                                                                               & \underline{44.0}           & 73.9                       & \textbf{30.4}           & {54.5}                   & 73.9                        & \underline{92.9}             & \underline{90.2}       & \underline{94.5}        & \underline{40.4}       & {46.2}                  & {59.1}                 & {68.9}                  & \cellcolor{col_color}56.3           \\
    KL-Cov                                                                                                                                 & 43.9                       & 71.8                       & \underline{30.1}        & {54.9}                   & \underline{75.0}            & {91.0}                       & 90.0                   & 94.2                    & \textbf{40.5}          & \textbf{50.2}           & \underline{59.9}       & \underline{69.0}        & \cellcolor{col_color}56.5           \\
    \rowcolor{gray!10}\modelname                                                                                                           & \textbf{45.1}              & \underline{74.6}           & \underline{30.1}        & \underline{55.1}         & \textbf{75.5}               & \textbf{95.5}                & \textbf{91.1}          & \textbf{95.2}           & {39.7}                 & \underline{49.1}        & \textbf{60.8}          & \textbf{69.5}           & \cellcolor{col_color}\textbf{57.0}  \\
    \bottomrule
  \end{tabular}
  }
  \caption{Main results in percentage (\%) on mathematics reasoning benchmarks
  with \texttt{DeepScaleR-1.5B-8k}. }
  \label{tab:ds_main_results}
\end{table*}

\section{Experiments and Analysis}
\label{sec:exp}

\subsection{Experiment Setup}

\paragraph{Training.} 
\modelname is evaluated under two LLM settings: \texttt{Qwen2.5-Math-7B} for standard-length 4k math reasoning and \texttt{DeepScaleR-1.5B-8k} for long-chain reasoning up to 24k tokens~\cite{DBLP:journals/corr/abs-2409-12122,Guo2025Deepseek}.
Following prior work~\cite{DBLP:journals/corr/abs-2506-02208,DeepScaleR}, both models are trained on the DeepScaleR dataset~\cite{DeepScaleR}.
During rollout, decoding uses a temperature of $1.0$ and top-p of $0.95$.
For \modelname, the scaling coefficient is $\alpha=0.2$ and the clipping range is $\varphi=2$.
Full implementation details appear in Appendix~\ref{app:training_hyperparameters}.

\noindent\textbf{Baselines.}
The comparison includes strong baselines discussed in Related Work Appendix~\ref{app:related_work}: classic RLVR methods
(PRIME~\cite{DBLP:journals/corr/abs-2502-01456}, W-REINF~\cite{Zhu2025TheSE}, GRPO~\cite{DBLP:journals/corr/abs-2402-03300}, and DAPO~\cite{DBLP:journals/corr/abs-2503-14476})
and entropy-aware interventions
(EntroReg~\cite{DBLP:journals/corr/abs-2506-14758}, Forking~\cite{DBLP:journals/corr/abs-2506-01939}, KL-Cov~\cite{DBLP:journals/corr/abs-2505-22617}, and EntroAdv~\cite{DBLP:journals/corr/abs-2506-14758}).
Baseline implementation details appear in Appendix~\ref{app:baseline_implementation}, and Appendix~\ref{app:method_comparison} gives the systematic method comparison.

\noindent\textbf{Evaluation.}
Evaluation covers six widely used reasoning benchmarks:
AIME24, AIME25, AMC, MATH~\cite{DBLP:conf/nips/HendrycksBKABTS21}, Minerva~\cite{DBLP:conf/nips/LewkowyczADDMRS22}, and Olympiad~\cite{DBLP:conf/acl/HeLBHTSHHHZLQL024}.
Inference uses vLLM~\cite{DBLP:conf/sosp/KwonLZ0ZY0ZS23} with a temperature $0.6$ and top-p $0.95$.
To reduce variance in stochastic decoding, we report $\text{Avg@}32$ and $\text{Pass@}32$ for AIME24, AIME25, and AMC, and $\text{Avg@}4$ and $\text{Pass@}4$ for the remaining benchmarks.
Appendix~\ref{app:evaluation_setup} reports additional benchmark statistics and evaluation details.

\subsection{Main Results}

Tables~\ref{tab:qwen_main_results} and~\ref{tab:ds_main_results} detail the reasoning performance of LLMs across various benchmarks. 

\paragraph{Performance gains concentrate on hard tasks.}
Section~\ref{sec:info_theory} frames entropy as a necessary-condition signal for where large hindsight shifts can reside, rather than an estimator of token credit itself.
Harder reasoning tasks contain more uncertain branch points, giving the signed-capacity proxy more opportunities to correct the uniform reward broadcast in GRPO.
Accordingly, \modelname achieves its most significant gains on demanding benchmarks.
On \texttt{Qwen2.5-Math-7B}, it reaches 39.8 Avg@$32$ on AIME24 (vs.\ DAPO 35.7, EntroAdv 36.8) and posts the highest aggregate score of 48.6\%.
This trend persists on \texttt{DeepScaleR-1.5B-8k} with longer reasoning traces (up to 24k tokens), where \modelname secures a 57.0\% aggregate despite increased optimization difficulty~\cite{DBLP:journals/corr/abs-2502-03373}.
It also maintains competitive performance on partially saturated benchmarks (AMC, MATH), where fewer uncertain branch points make credit assignment a less dominant bottleneck.

\paragraph{Accuracy and diversity improve jointly.}
Our four-quadrant analysis (Section~\ref{sec:decompose_token_effect}) reveals that PHR and NHR interventions improve different parts of the Avg/Pass profile without directly observing latent token credit.
Tables~\ref{tab:qwen_main_results} and~\ref{tab:ds_main_results} show that HAPO improves the Avg/Pass profile relative to GRPO across the two model settings.
In contrast, baselines like DAPO often sacrifice $\text{Avg@}k$ to boost $\text{Pass@}k$, showing how less targeted update rules can trade accuracy for diversity rather than enhancing both.

\paragraph{Comparison with entropy-aware alternatives.}
Forking uses a binary high-entropy mask, and EntroAdv adds an entropy bonus.
Both surpass GRPO, but their single-dimensional interventions remain suboptimal.
Visualizations in Appendix~\ref{app:credit_assignment_analysis} illustrate that \modelname applies continuous four-quadrant modulation, using bounded proxy allocation instead of relying on hard cutoffs.
Appendix~\ref{app:entropy_dynamics} further shows that \modelname maintains stable entropy throughout training, avoiding the entropy collapse typical of GRPO.

Out-of-domain evaluation on MMLU-Pro mirrors these main results. As detailed in Appendix~\ref{app:mmlu_pro}, \modelname achieves the best overall accuracy (36.6\%), indicating that the bounded proxy allocation may transfer beyond mathematical reasoning.

\subsection{Efficiency Analysis}
Efficiency metrics offer complementary evidence. By concentrating update budget where entropy leaves room for signed hindsight correction, \modelname can improve training and inference efficiency.

\begin{figure}[h]
  \centering
  \includegraphics[width=\linewidth]{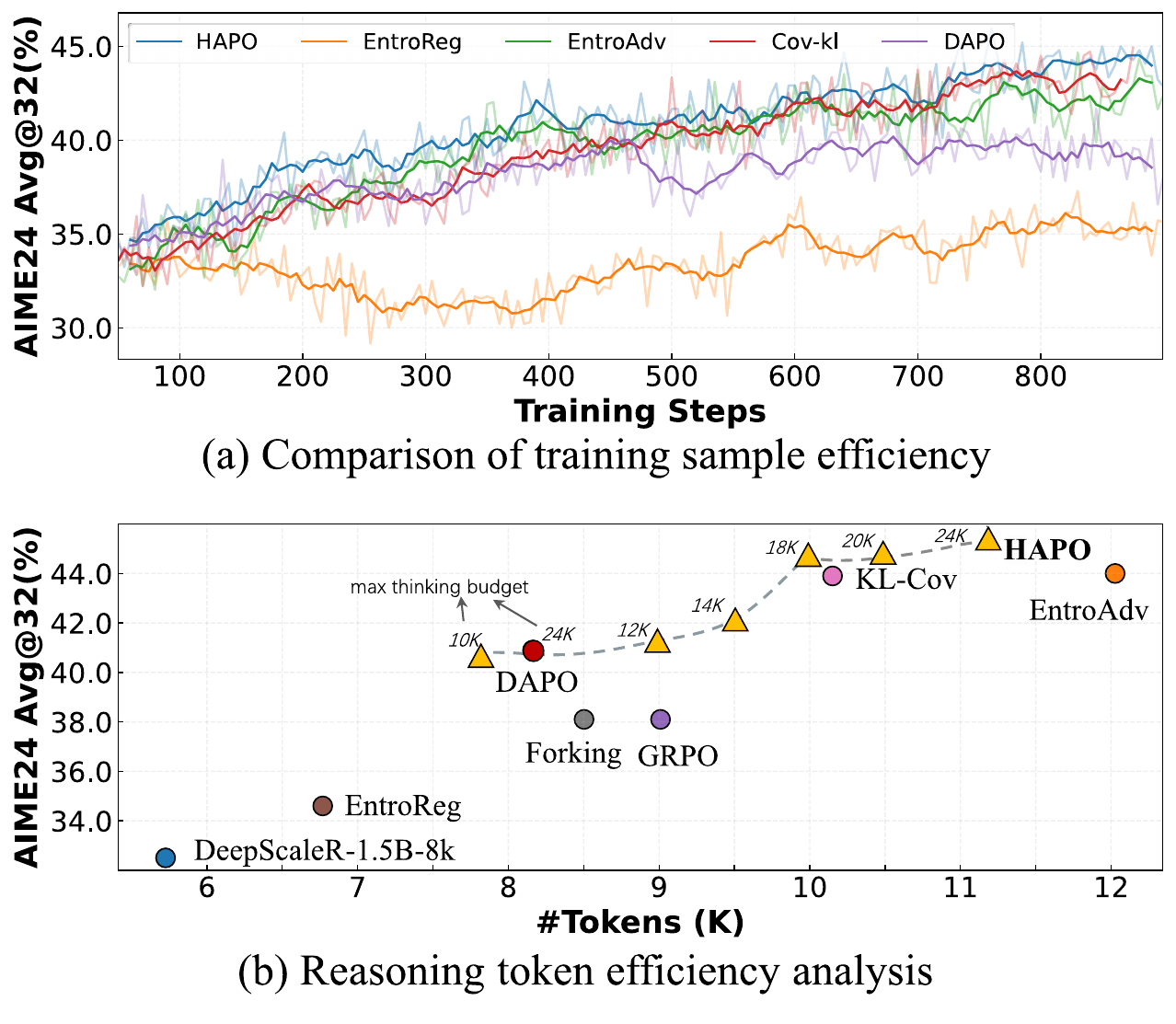}
  \vspace{-0.8cm}
  \caption{Performance--efficiency trade-off for 1.5B models on AIME24. We plot \modelname 's Pareto frontier under different max response length budget.}
  \label{fig:effi_compare}
\end{figure}

Figure~\ref{fig:effi_compare} exhibits the same trend on AIME24. During training, \modelname outpaces GRPO and DAPO, highlighting good sample efficiency. At test time, it avoids EntroAdv's over-generation, yielding a better Pareto frontier between accuracy and token usage. Taken together, these improvements are better explained by bounded proxy allocation than by longer reasoning chains alone. Additional efficiency evidence on MATH and Olympiad appears in Appendix~\ref{app:efficiency_additional}.

\subsection{Ablation Study}
This ablation masks the \modelname proxy term for one quadrant (PHR, PLR, NHR, or NLR), reverting those tokens to vanilla GRPO updates, and asks whether the same high-entropy proxy pattern persists.
The experiments are averaged over three trials.
\begin{table}[h]
\setlength\tabcolsep{2pt}
\resizebox{1\linewidth}{!}{
\begin{tabular}{lcccccccccccc}
\toprule
{}     & {AIME24/25} & {AMC} & {MATH} & {Minerva} & {Olympiad} & {\texttt{avg}}  \\ \hline
\modelname               & \textbf{39.8}/\textbf{17.2}                 & 62.1        & {83.7}               & \textbf{43.9}                  & \textbf{45.1}                & \textbf{48.6}                            \\
        \  \textbf{\textit{w/o} PHR} & 35.6/15.6  & \textbf{63.4}  & \textbf{83.9}  & 42.9  & 43.2  & 47.4  \\ 
        \  \textbf{\textit{w/o} PLR} & 39.2/15.8  & 63.1  & 83.2  & 43.1  & 44.5  & 48.1  \\ 
        \  \textbf{\textit{w/o} NLR} & 36.7/17.1  & 61.2  & 83.2  & \textbf{43.9}  & 43.8  & 47.7  \\ 
        \  \textbf{\textit{w/o} NHR} & 35.7/16.3  & 61.8  & 82.8  & 44.0 & 42.9  & 47.2 \\ 
\bottomrule
\end{tabular}
}
\caption{Ablation study of \modelname with 7B model. Avg@$32$ for AIME24, AIME25, AMC, and Avg@$4$ for other benchmarks are reported in percentage (\%).}
\label{tab:ablation}
\end{table}

\begin{itemize}[leftmargin=*, nosep]
    \item \textbf{High-Entropy (PHR \& NHR).} \textit{w/o} NHR yields the lowest aggregate score (47.2\%), and \textit{w/o} PHR reduces the score to 47.4\%. These drops indicate that signed high-entropy modulation contributes the most useful proxy signal in both failed and successful trajectories.
    \item \textbf{Low-Entropy Quadrants (PLR \& NLR).} Removing low-entropy components produces only modest changes (e.g., \textit{w/o} PLR causes a $\sim$0.5\% drop). This pattern matches the entropy ceiling: low-entropy positions cannot support large $I(\textbf{o}_{i,t}; r | s_{i,t})$, so removing their modulation changes the objective less.
\end{itemize}
These ablations mirror the diagnostic pattern in Section~\ref{sec:decompose_token_effect} and indicate that the useful proxy signal is predominantly concentrated in high-entropy regions.
Appendix~\ref{app:hyperparameter_full} reports hyperparameter sensitivity.

\section{Conclusion}
This paper studies token-level credit assignment in RLVR from a hindsight perspective.
This CMI adaptation shows that the expected posterior-prior shift induced by the final reward is upper-bounded by token entropy, making entropy a tractable ceiling rather than an exact credit estimator.
The Four Quadrant Decomposition and gradient analysis provide diagnostic evidence that sustained gains are more pronounced in signed high-entropy updates, motivating \modelname as a simple, sign-preserving modification to GRPO.
Across mathematical reasoning benchmarks and two model settings, \modelname improves accuracy, diversity, and efficiency while preserving the simplicity of GRPO.
The polarity-entropy framework can also serve as a diagnostic tool for credit assignment in RLVR beyond the specific instantiation presented here.

\section*{Limitations}
We identify several main limitations of our work.
First, entropy is a one-sided capacity signal rather than a direct estimator of token-level hindsight credit.
\modelname therefore does not estimate the true posterior-prior ratio; it implements bounded, sign-preserving proxy allocation.
Similarly, the quadrant decomposition is diagnostic and should be interpreted as intervention evidence rather than direct observation of latent token credit.
Second, our method relies on outcome-based verifiable rewards, which are readily available in domains like mathematics. Preliminary results on MMLU-Pro (Appendix~\ref{app:mmlu_pro}) suggest broader applicability.
Third, \modelname introduces additional hyperparameters, $\alpha$ and $\varphi$. Appendix~\ref{app:hyperparameter_full} reports sensitivity analysis for $\alpha$, and our experiments fix $\varphi=2$.

\section*{Ethical Considerations}
This work focuses on enhancing the mathematical reasoning capabilities of Large Language Models through improved reinforcement learning techniques.
While our primary goal is to advance the state of automated reasoning, we acknowledge potential ethical implications, such as the consumption of computational resources and energy. We rely exclusively on publicly available datasets and benchmarks, which contain no private or sensitive information. We confirm that our use of these artifacts aligns with their intended use and access conditions. We also strictly adhere to the licensing terms of all utilized models and data.
Our use of existing artifacts, including \texttt{Qwen2.5-Math-7B}, \texttt{DeepScaleR-1.5B-8k}, the mathematical reasoning datasets (AIME24/25, AMC, MATH, Minerva, Olympiad), and MMLU-Pro, falls within their intended use for research and development purposes.
To promote transparency and reproducibility, we intend to release our code and derivative artifacts under a permissive open-source license.

\bibliography{custom}

\appendix

\section{Related Works}
\label{app:related_work}

\subsection{Reinforcement Learning with Verifiable Rewards}
Reinforcement Learning with Verifiable Rewards (RLVR) has emerged as a scalable and effective paradigm for enhancing the reasoning capabilities of Large Language Models (LLMs)~\cite{DBLP:journals/corr/abs-2504-05185,DBLP:journals/corr/abs-2503-18892,DBLP:journals/corr/abs-2503-06639}.
Unlike Reinforcement Learning from Human Feedback (RLHF)~\cite{DBLP:conf/nips/ChristianoLBMLA17,DBLP:journals/ail/Lee25}, which relies on preference models, RLVR utilizes rule-based outcome rewards, making it particularly suitable for domains with objective correctness such as mathematics and coding.
Recent successes, including DeepSeek-R1~\cite{Guo2025Deepseek} and Kimi 1.5~\cite{DBLP:journals/corr/abs-2501-12599}, demonstrate that RLVR can incentivize long-chain reasoning and improve generalization beyond supervised fine-tuning.
However, most RLVR approaches, such as GRPO~\cite{DBLP:journals/corr/abs-2402-03300} and its variants~\cite{DBLP:journals/corr/abs-2506-01939,DBLP:journals/corr/abs-2504-02546,DBLP:journals/corr/abs-2505-22617}, typically assign sequence-level rewards uniformly across all tokens.
This uniform assignment overlooks the distinct roles tokens play in reasoning chains, exacerbating the credit assignment problem~\cite{Minsky1995StepsTA}.
Recent studies have analyzed token patterns in reasoning~\cite{DBLP:journals/corr/abs-2506-01939,DBLP:journals/corr/abs-2507-15778}, but the mechanism by which outcome rewards should be allocated across token positions remains underexplored.
In this work, we focus on outcome-based RLVR and investigate how sequence-level rewards manifest as token-level update effects.

\subsection{Credit Assignment in Outcome-Based RLVR}
The credit assignment problem—identifying which specific steps contribute to a final success or failure—remains a fundamental bottleneck in training long-horizon reasoning LLMs~\cite{DBLP:journals/corr/abs-2211-14275,DBLP:conf/iclr/LightmanKBEBLLS24}.
Traditional approaches often employ process-based RL with step- or token-level supervision, such as Process Reward Models (PRMs)~\cite{DBLP:journals/corr/abs-2502-01456,DBLP:conf/acl/WangLSXDLCWS24} and value estimation~\cite{DBLP:conf/icml/KazemnejadAPSRC25,tran2025exploiting}.
However, these methods suffer from high annotation and training costs, as well as susceptibility to reward hacking~\cite{DBLP:conf/acl/WangLSXDLCWS24, DBLP:conf/iclr/LightmanKBEBLLS24}.
Consequently, recent research has shifted towards refining credit assignment within outcome-based RLVR.
One line of work dissects the disparate impacts of positive and negative samples~\cite{chen2025passktrainingadaptivelybalancing,Zhu2025TheSE,DBLP:conf/acl/ZhangZWZLYLZL25}, with some arguing that reinforcing negative samples is surprisingly more effective for generalization~\cite{Zhu2025TheSE}.
Another direction differentiates token update magnitude based on entropy~\cite{DBLP:journals/corr/abs-2506-01939,DBLP:journals/corr/abs-2505-15134,DBLP:journals/corr/abs-2509-09265}.
However, these works yield conflicting views on whether high-entropy or low-entropy tokens should be prioritized for learning~\cite{DBLP:journals/corr/abs-2506-01939,DBLP:journals/corr/abs-2505-15134}.
The Four Quadrant Decomposition helps reconcile these perspectives by separating reward polarity from token entropy and offering a diagnostic framework for studying signed token update effects.

\subsection{Information-Theoretic Perspectives}
Beyond empirical analyses, a growing body of theoretical work seeks principled foundations for credit assignment in RL.
\citet{arumugam2021information} formalized action credit through Conditional Mutual Information (CMI) between actions and returns. It implies that policy entropy upper-bounds the information an action can carry about the outcome.
This result, originally developed for general MDPs, suggests a natural interpretation for RLVR:
many routine tokens in a reasoning chain are near-deterministic, so the CMI bound rules out large expected reward-conditioned shifts at these positions.
This work maps the CMI framework to the RLVR setting (Section~\ref{sec:info_theory}), deriving signed-capacity predictions that our four-quadrant experiments subsequently probe.
Concurrently, \citet{TokenHiddenReward2025} introduced the Token Hidden Reward framework, which learns an implicit per-token reward signal and found that reward-carrying tokens overlap with high-entropy positions by over 90\%---empirical evidence in line with the hindsight-capacity view.
\citet{LearningDynamicsLLM2025} analyzed learning dynamics during fine-tuning and identified a ``squeezing effect'' at low-entropy positions, matching our gradient analysis showing that the factor $(1-\pi_{o_t})$ vanishes as confidence increases.
These complementary perspectives converge on the same conclusion: entropy is a useful capacity signal for locating where non-trivial hindsight shifts may reside, even though it is not itself an exact credit estimator.

\subsection{The Entropy Interventions Approach in RLVR}
Policy entropy has long served as a regularizer to promote exploration in reinforcement learning~\cite{DBLP:phd/us/Ziebart18,DBLP:conf/aaai/ZiebartMBD08}.
In RLVR, entropy is increasingly used to shape token updates, improving training efficiency and stability.
Prior work largely falls into two camps:
(1) \textbf{Exploration-centric methods}, which amplify gradients at high-entropy branching points (e.g., Forking~\cite{DBLP:journals/corr/abs-2506-01939}, EntroAdv~\cite{DBLP:journals/corr/abs-2506-14758}); and
(2) \textbf{Stability-centric methods}, which prioritize low-entropy tokens to ground knowledge and prevent model collapse~\cite{DBLP:journals/corr/abs-2509-09265,DBLP:journals/corr/abs-2505-15134,DBLP:journals/corr/abs-2505-12929}.
Our work bridges these entropy interventions.
\citet{DBLP:journals/corr/abs-2506-01939} reported the ``80/20 rule'' (a minority of tokens drive reasoning), and our Four Quadrant Decomposition offers one capacity-based interpretation of why this is plausible: the CMI bound (Proposition~\ref{prop:credit_bound}) rules out large expected hindsight shifts at low-entropy positions, so the ``important 20\%'' are candidates whose entropy permits non-trivial reward-conditioned shifts.
\modelname does not reject this minority-token phenomenon; instead, it refines the hard cutoff into continuous, bounded proxy allocation, strengthening signed updates at high-entropy forks while conservatively down-weighting low-entropy segments.
This yields a capacity-guided proxy allocation mechanism that remains compatible with the standard GRPO objective.

\subsection{Detailed Method Comparison}
\label{app:method_comparison}

Table~\ref{tab:method_comparison} summarizes \modelname alongside concurrent and related entropy-based update allocation methods.

\begin{table*}[th]
\centering
\resizebox{0.95\linewidth}{!}{
\begin{tabular}{lccccc}
\toprule
\textbf{Method} & \textbf{Token Criterion} & \textbf{Polarity-Aware} & \textbf{CMI-Bound Grounding} & \textbf{Modulation Type} & \textbf{Auxiliary Model} \\
\midrule
GRPO & None (uniform) & \ding{55} & \ding{55} & Uniform advantage & \ding{55} \\
Forking & Entropy (top-20\%) & \ding{55} & \ding{55} & Binary mask & \ding{55} \\
EntroAdv & Entropy (additive) & \ding{55} & \ding{55} & Additive entropy bonus & \ding{55} \\
KL-Cov & KL divergence & \ding{55} & \ding{55} & Continuous & \ding{55} \\
PRIME & Learned reward & \ding{55} & \ding{55} & Token-level reward & \ding{51} \\
A3PO & Probability (high/low) & \ding{51} & \ding{55} & Adaptive time-decaying & \ding{55} \\
\midrule
\modelname & Group-normalized entropy & \ding{51} & CMI bound & Bounded sign-preserving & \ding{55} \\
\bottomrule
\end{tabular}
}
\caption{Comparison of token-level update allocation methods. ``Polarity-Aware'' indicates whether the method applies different modulations to positive vs.\ negative samples. ``CMI-Bound Grounding'' denotes whether the design is motivated by the entropy capacity bound used in this work.}
\label{tab:method_comparison}
\end{table*}

Among these methods, A3PO~\cite{A3PO2025} is the most related concurrent work, as it is also polarity-aware.
However, the two approaches differ in three key respects.
First, A3PO differentiates tokens by their absolute probability (high- vs.\ low-probability), but \modelname uses group-normalized entropy, which captures relative distributional uncertainty within the sampled group rather than point-wise confidence.
Second, A3PO applies adaptive time-decaying modulation that gradually reduces the effect of its token weighting during training. In contrast, \modelname maintains bounded sign-preserving modulation throughout, preserving the entropy-aware signal at all stages.
Third, \modelname is motivated by the CMI upper bound (Proposition~\ref{prop:credit_bound}), which supports using entropy as a capacity proxy for reward-conditioned correction.

\section{Training Setup}
\label{app:training_hyperparameters}

\subsection{Four Quadrant Training Objective}
\label{app:training_objective}
The Four Quadrant Decomposition consists of four components: \textit{Positive High-entropy Reinforcement} (PHR), \textit{Positive Low-entropy Reinforcement} (PLR), \textit{Negative High-entropy Reinforcement} (NHR), and \textit{Negative Low-entropy Reinforcement} (NLR).
This decomposition allows us to diagnostically investigate how signed high- and low-entropy token updates shape model behavior.
Formally, we decompose the GRPO objective into quadrant-restricted subobjectives based on Equation~\ref{eq:grpo} as follows:
\begin{equation*}
\hspace{-0cm}\resizebox{\hsize}{!}{$\displaystyle
\begin{aligned}
\mathcal{L}_{\mathrm{GRPO}}(\theta) = &- \mathbb{E}_{q\sim Q,\{o_i\}_{i=1}^G\sim\pi_{\theta_{\mathrm{old}}}(\cdot|q)} \\
&\Bigg[\!\frac{1}{\sum_{i=1}^G |o_i|}\!\sum_{i=1}^G\!\sum_{t=1}^{|o_i|} 
\sum_{m\in\{\text{I},\text{II},\text{III},\text{IV}\}}\mathbb{I}_{m}\min\!\Big(\rho_{i,t}\!\cdot\!\hat{A}_{i,t},\\
&\!\operatorname{clip}(\rho_{i,t}, \!1-\epsilon_{\text{low}},\!1+\epsilon_{\text{high}})\hat{A}_{i,t}\Big)\Bigg] \\
= &\  \mathcal{L}_{\mathrm{PHR}}(\theta,\mathbb{I}_{\text{I}}) + \mathcal{L}_{\mathrm{PLR}}(\theta,\mathbb{I}_{\text{II}}) + \\
& \ \mathcal{L}_{\mathrm{NHR}}(\theta,\mathbb{I}_{\text{III}}) + \mathcal{L}_{\mathrm{NLR}}(\theta,\mathbb{I}_{\text{IV}})
\end{aligned}$}
\label{eq:grpo_decompose}
\end{equation*}
where each $\mathbb{I}_{m}$ is an indicator function that evaluates to 1 if the token belongs to quadrant $m$ and 0 otherwise. The four masks are defined as follows:
\begin{equation*}
  \begin{aligned}
  \mathbb{I}_{\text{I}} = 
  \begin{cases}
    1, & \text{if} \  \mathcal{H}_{i,t}\ge\mu_{\mathcal{H}}, r_i\ge0 \\
    0, & \text{otherwise}
  \end{cases}
\end{aligned}
\end{equation*}
\begin{equation*}
  \begin{aligned}
  \mathbb{I}_{\text{II}} = 
  \begin{cases}
    1, & \text{if} \  \mathcal{H}_{i,t}<\mu_{\mathcal{H}}, r_i\ge0 \\
    0, & \text{otherwise}
  \end{cases}
\end{aligned}
\end{equation*}
\begin{equation*}
  \begin{aligned}
  \mathbb{I}_{\text{III}} = 
  \begin{cases}
    1, & \text{if} \  \mathcal{H}_{i,t}\ge\mu_{\mathcal{H}}, r_i<0 \\
    0, & \text{otherwise}
  \end{cases}
\end{aligned}
\end{equation*}
\begin{equation*}
  \begin{aligned}
  \mathbb{I}_{\text{IV}} = 
  \begin{cases}
    1, & \text{if} \  \mathcal{H}_{i,t}<\mu_{\mathcal{H}}, r_i<0 \\
    0, & \text{otherwise}
  \end{cases}
\end{aligned}
\end{equation*}
where $\mu_{\mathcal{H}}$ is the average token entropy over all tokens in the sampled group. In our diagnostic experiments, only the selected quadrant contributes gradients; the decomposition is therefore used to probe where useful signed update effects concentrate, rather than to claim direct observation of true token credit.

\subsection{Quadrant Decomposition Experiments}
All experiments use Verl~\cite{DBLP:conf/eurosys/ShengZYWZZPL025} as the RL training framework and run on a single node with 8 NVIDIA A800 GPUs.
Negative samples use $r_i=-1$, and positive samples use $r_i=+1$.
Thus, the diagnostic masks are defined by the binary outcome reward, while \modelname uses the sign of the group-normalized advantage in its sign-preserving shaping.
The training dataset is DeepScaleR, which consists of 40,315 unique math reasoning problem-answer pairs.
Additionally, we adopt dynamic sampling and Clip-Higher from DAPO~\cite{DBLP:journals/corr/abs-2503-14476}, as they are commonly used in modern RLVR for efficiency and stability.
Although the KL constraint is commonly used to regularize policy updates, the core learning signal still comes from the verifiable reward, so these experiments use GRPO without the KL constraint to isolate the effect of reward learning.
For PHR, PLR, NHR, and NLR, configuration details are listed in Table~\ref{tab:training_hyperparameters_decompose}.
Each diagnostic intervention is run twice, and the reported results are averaged over these runs.
\begin{table}[ht]
  \centering
  \begin{tabular}{lc}
    \toprule
    \textbf{Config}       & \texttt{Qwen2.5-Math-7B}         \\
    \hline
    lr                     & 1e-6                  \\
    kl\_coef               & 0.0                   \\
    max\_prompt\_length                  & 1024                  \\
    max\_response\_length                    & 3072                     \\
    train\_batch\_size              & 256             \\
    ppo\_mini\_batch\_size                & 32                      \\
    clip\_ratio\_low $\epsilon_{low}$             & 0.20                    \\
    clip\_ratio\_high $\epsilon_{high}$                  & 0.28                    \\
    temperature                & 1.0                 \\
    rollout.n $G$                 & 8                     \\
    total\_training\_steps        & 120                \\
    \bottomrule
  \end{tabular}
  \caption{Training hyperparameters for the four-quadrant experiments.}
  \label{tab:training_hyperparameters_decompose}
\end{table}

\subsection{Main Experiments}

The main experiments use two open-source LLMs, \texttt{Qwen2.5-Math-7B}\footnote{https://huggingface.co/Qwen/Qwen2.5-Math-7B} and \texttt{DeepScaleR-1.5B-8k}\footnote{https://huggingface.co/cmu-deepscaler-colab/DeepScaleR-8K}.
\texttt{Qwen2.5-Math-7B} is a base model that generates answers directly, without prefacing them with ``<think>''.
For comparison, we also consider \texttt{DeepScaleR-1.5B-8k}, a representative model that outputs long chain-of-thought reasoning wrapped between ``<think>'' and ``</think>''.
It is a length-compressed, fine-tuned variant from the DeepSeek family, offering better training efficiency.
For the experiments of \modelname, configuration details are listed in Table~\ref{tab:training_hyperparameters_main}. 
The training data comes from DeepScaleR.
For \texttt{Qwen2.5-Math-7B}, the original prompt template is used. For \texttt{DeepScaleR-1.5B-8k}, the suffix ``\texttt{$\backslash$nPlease reason step by step, and put your final answer within $\backslash$boxed\{\}.}'' is appended to the original prompt as the template.
\begin{table}[ht]
  \centering
  \resizebox{\linewidth}{!}{
  \begin{tabular}{lcc}
    \toprule
    \textbf{Config}       & \texttt{Qwen2.5-Math-7B}   &  \texttt{DeepScaleR-1.5B-8k}    \\
    \hline
    lr                     & 1e-6    & 1e-6              \\
    kl\_coef               & 0.0     & 0.0              \\
    max\_prompt\_length                  & 1024      & 2048            \\
    max\_response\_length                    & 3072          & 24576           \\
    train\_batch\_size              & 512     & 64        \\
    ppo\_mini\_batch\_size                & 32     & 32                 \\
    clip\_ratio\_low $\epsilon_{low}$             & 0.20       & 0.20             \\
    clip\_ratio\_high $\epsilon_{high}$                  & 0.28       & 0.28             \\
    temperature                & 1.0        & 1.0         \\
    rollout.n $G$                 & 8       & 16              \\
    total\_training\_steps        & 220       & 900         \\
    \bottomrule
  \end{tabular}
  }
  \caption{Training hyperparameters for \modelname.}
  \label{tab:training_hyperparameters_main}
\end{table}

\subsection{Experiment Setup for Other Baselines}
\label{app:baseline_implementation}
For reimplemented entropy baselines, we use the same training infrastructure and DAPO-style stability techniques, including Clip-Higher, dynamic sampling, token-level policy gradient loss, and overlong penalty.
When a baseline releases model weights (i.e., W-REINF, GRPO, and PRIME), we directly re-evaluate the released weights under our evaluation pipeline. For baselines that require local reproduction, we use the corresponding open-source code and match the reported implementation choices when available.
We do not report PRIME on the 1.5B model because it requires training a policy-matched implicit process reward model.

\noindent\textbf{GRPO}~\cite{DBLP:journals/corr/abs-2402-03300} eliminates the memory-intensive critic model by optimizing the policy using group-based relative advantages derived from a group of rollouts.
For the locally reproduced GRPO baseline, we use the group-based relative advantage objective and set the clipping parameter $\epsilon_{\text{high}}$/$\epsilon_{\text{low}}$ for the policy objective to 0.2.

\noindent\textbf{DAPO}~\cite{DBLP:journals/corr/abs-2503-14476} builds upon and refines the GRPO framework. It addresses entropy collapse in long-chain-of-thought reasoning by introducing techniques like Clip-Higher and dynamically filtering out zero-variance sample groups. 

\noindent\textbf{W-REINF}~\cite{Zhu2025TheSE} explicitly up-weights negative samples in the objective function to preserve generation diversity. We set the positive advantage weight to 0.1, effectively prioritizing the gradient signal from incorrect reasoning paths (negative samples) to prevent mode collapse.

\noindent\textbf{PRIME}~\cite{DBLP:journals/corr/abs-2502-01456} enables dense, token-level reward optimization without an explicit reward model by deriving implicit process rewards from the log-likelihood ratio between the policy and a reference model. We directly report the results of the released model weights.

\noindent\textbf{EntroReg}~\cite{DBLP:journals/corr/abs-2506-14758} augments the policy loss with an entropy term $\mathcal{H}(\pi)$ to encourage uniform exploration across the vocabulary, serving as a baseline for exploration effectiveness.

\noindent\textbf{EntroAdv}~\cite{DBLP:journals/corr/abs-2506-14758} modifies the advantages with a clipped, gradient-detached entropy term to incentivize exploration at high-uncertainty positions. The shaped advantage is computed as $A^{\mathrm{shaped}}_t = A_t + \psi(\mathcal{H}_t)$, where $\psi(\mathcal{H}_t)>0$. The authors' recommended hyperparameters are used.

\noindent\textbf{Forking}~\cite{DBLP:journals/corr/abs-2506-01939} improves training efficiency by restricting policy gradient updates to the top 20\% of tokens with the highest entropy ("forking tokens"), which are treated as candidate high-uncertainty positions for useful updates.

\noindent\textbf{KL-Cov}~\cite{DBLP:journals/corr/abs-2505-22617} integrates a coverage-aware penalty based on the Kullback-Leibler divergence, formulated as $\mathcal{L}_{\text{cov}} = \mathbb{E}_{a \sim \pi} [\min(\mathbb{D}_{KL}(\pi_{\theta} \| \pi_{\text{ref}}), \delta)]$. This keeps exploration within a trust region $\delta$ that covers diverse reasoning paths without diverging into incoherent token sequences.

\section{Evaluation Setup}
\label{app:evaluation_setup}

\subsection{Metrics}
Accuracy based on greedy decoding can be unreliable~\cite{DBLP:journals/corr/abs-2504-07086}.
To comprehensively evaluate the exploitation and exploration performance of LLMs on mathematical reasoning tasks, we employ two primary metrics: $\text{Avg@}k$ and $\text{Pass@}k$.
During evaluation, the model generates responses at a temperature of 0.6 and a top-p of 0.95.
In particular, $\text{Pass@}k$ is defined as the fraction of problems for which at least one correct answer is produced in $k$ independent trials. $\text{Avg@}k$ estimates the expected correctness of a sampled response and serves as a robust proxy for single-sample accuracy.
They are calculated as:
  \begin{align}
    \text{Pass@}k &= \mathbb{E}_{q\sim Q}\left[ 1 - \frac{\binom{n-c}{k}}{\binom{n}{k}}\right]\\
    \text{Avg@}k &= \mathbb{E}_{q\sim Q}\left[ \ \frac{\ c\ }{\ n\ }\ \right]
  \end{align}
where $n$ is the total number of sampled responses used for estimation and $c$ is the number of correct rollouts.
The answer verification functions are implemented from Qwen\footnote{https://github.com/QwenLM/Qwen2.5-Math} and Math-Verify\footnote{https://github.com/huggingface/Math-Verify} for accurate evaluation.

\subsection{Information for Test Benchmarks}
\modelname and baselines are evaluated on six widely used mathematical reasoning benchmarks:
AIME24/25, AMC23\footnote{https://huggingface.co/datasets/AI-MO/aimo-validation-amc}, MATH~\cite{DBLP:conf/nips/HendrycksBKABTS21}, Minerva~\cite{DBLP:conf/nips/LewkowyczADDMRS22}, and Olympiad~\cite{DBLP:conf/acl/HeLBHTSHHHZLQL024}.
The statistics of these benchmarks are summarized in Table~\ref{tab:benchmark_statistics}.

\begin{table}[!ht]
    \centering
    \begin{tabular}{lll}
    \toprule
        \textbf{Benchmark} & \textbf{\#Questions} & \textbf{Level} \\ \hline
        AIME24 & 30 & Olympiad \\ 
        AIME25 & 30 & Olympiad \\ 
        AMC & 83 & Intermediate \\ 
        MATH & 500 & Advanced \\ 
        Minerva & 272 & Graduate \\ 
        Olympiad & 675 & Olympiad \\ \bottomrule
    \end{tabular}
    \caption{Dataset statistics.\label{tab:benchmark_statistics}}
    
\end{table}

\section{Theoretical Analysis}
\label{app:theoretical_analysis}

This appendix gives additional theoretical details for the information-theoretic hindsight analysis introduced in Section~\ref{sec:info_theory}.
It first presents the formal CMI derivation and its implications (Appendix~\ref{app:cmi_derivation}), then analyzes how this manifests in gradient dynamics (Appendix~\ref{app:gradient_analysis}), and finally discusses connections to related theoretical frameworks (Appendix~\ref{app:theory_connections}).

\subsection{HCA--CMI Derivation}
\label{app:cmi_derivation}

We formalize the HCA target in RLVR through the lens of CMI, building upon the information-theoretic framework of~\citet{arumugam2021information}.

\paragraph{Notation and Setup.}
We map the RLVR token generation process to the standard RL formalism as follows.
At each decoding step $t$, the ``state'' is the context $s_{i,t} := (q, o_{i,<t})$, the ``action'' is the sampled choice $o_{i,t} \sim \pi_\theta(\cdot | s_{i,t})$, and the ``return'' is the outcome reward $r$.
We use bold $\textbf{o}_{i,t}$ to denote the random next token in comparison with the observed token $o_{i,t}$.
The policy entropy at step $t$, $\mathcal{H}_{i,t} = H(\textbf{o}_{i,t} | s_{i,t}) = - \sum_{v \in \mathcal{V}} \pi_\theta(v | s_{i,t}) \log \pi_\theta(v | s_{i,t})$, quantifies the model's uncertainty over the next token.
For HCA analysis, $\pi_\theta$ denotes the behavior policy at rollout time; in GRPO this corresponds to $\pi_{\theta_{\mathrm{old}}}$.

\paragraph{Proposition 1 (Hindsight-CMI Connection).}
\textit{For any prefix state $s_{i,t}=(q,o_{i,<t})$, let $h_{\pi_\theta}(\cdot\mid s_{i,t},r)$ denote the reward-conditioned hindsight distribution over the next token. Then}
\begin{equation*}
\begin{aligned}
  &I(\textbf{o}_{i,t};\, r | s_{i,t}) \\ &=
  \mathbb{E}_{r\mid s_{i,t}}
  \left[
  \mathrm{KL}\!\left(h_{\pi_\theta}(\cdot | s_{i,t},r)\, \|\, \pi_\theta(\cdot | s_{i,t})\right)
  \right].
\end{aligned}
\end{equation*}
\begin{proof}
Following HCA's future-conditional distribution definition~\cite{Harutyunyan2019Hindsight}, $h_{\pi_\theta}$ is the posterior over the earlier token after observing the future outcome. By Bayes' rule, its posterior-prior ratio satisfies
\begin{equation*}
\begin{aligned}
\rho^{\mathrm{hs}}(v,s_{i,t},r)
&:=
\frac{h_{\pi_\theta}(v\mid s_{i,t},r)}
      {\pi_\theta(v\mid s_{i,t})} \\
&=
\frac{p(r\mid s_{i,t},\textbf{o}_{i,t}=v)}
      {p(r\mid s_{i,t})}.
\end{aligned}
\end{equation*}
The KL form of CMI gives~\cite{arumugam2021information}
\begin{equation*}
\begin{aligned}
&I(\textbf{o}_{i,t}; r | s_{i,t}) \\
&=
\mathbb{E}_{r\mid s_{i,t}}
\left[
\mathrm{KL}\!\left(p(\textbf{o}_{i,t}\mid s_{i,t},r)\,\|\,
 p(\textbf{o}_{i,t}\mid s_{i,t})\right)
\right].
\end{aligned}
\end{equation*}
Under the HCA notation, this future-conditioned posterior is $h_{\pi_\theta}(\cdot\mid s_{i,t},r)$, while the prior over the next token is the behavior policy $\pi_\theta(\cdot\mid s_{i,t})$; this gives the proposition. Expanding the KL term then gives the log-ratio form:
\begin{equation*}
\resizebox{\linewidth}{!}{$\displaystyle
\begin{aligned}
I(\textbf{o}_{i,t}; r | s_{i,t})
&=
\mathbb{E}_{r\mid s_{i,t}}
\mathbb{E}_{v\sim h_{\pi_\theta}(\cdot\mid s_{i,t},r)}
\left[
\log
\frac{h_{\pi_\theta}(v\mid s_{i,t},r)}
     {\pi_\theta(v\mid s_{i,t})}
\right] \\
&=
\mathbb{E}_{r\mid s_{i,t}}
\mathbb{E}_{v\sim h_{\pi_\theta}(\cdot\mid s_{i,t},r)}
\left[
\log \rho^{\mathrm{hs}}(v,s_{i,t},r)
\right].
\end{aligned}
$}
\end{equation*}
Therefore, CMI is the expectation of the log HCA ratio over outcomes and hindsight-conditioned token choices, establishing it as a distribution-level measure of the expected posterior-prior shift.
\end{proof}

\paragraph{Interpretation.}
Proposition~1 shows that CMI is the outcome-weighted KL shift from the behavior policy to the hindsight distribution. For a sampled token $o_{i,t}$ with observed outcome $r$, the corresponding pointwise hindsight evidence is the log posterior-prior ratio
\begin{equation*}
c^{\mathrm{hs}}
=
\log
\frac{h_{\pi_\theta}(o_{i,t}\mid s_{i,t},r)}
{\pi_\theta(o_{i,t} | s_{i,t})}
=
\log
\frac{p(r | s_{i,t},o_{i,t})}
{p(r | s_{i,t})}.
\end{equation*}
This pointwise evidence is useful for interpretation, but Proposition~1 and Proposition~2 bound only its expectation through the CMI, not every realized log-ratio.
GRPO does not estimate this hindsight distribution or its KL shift. Instead, it broadcasts a coarse outcome-conditioned scalar $\hat{A}_{i,t}$ to all tokens in the trajectory.

\paragraph{Proposition 2 (Entropy Capacity Bound).}
\textit{For any prefix state $s_{i,t} = (q, o_{i,<t})$, the information contribution of the next-token choice to the outcome $r$ satisfies:}
\begin{equation*}
  I(\textbf{o}_{i,t};\, r | s_{i,t}) \leq \mathcal{H}_{i,t}.
\end{equation*}
\begin{proof}
Starting from Proposition~1, write $h_r(v)=h_{\pi_\theta}(v\mid s_{i,t},r)$ and $\pi(v)=\pi_\theta(v\mid s_{i,t})$.
Expanding the hindsight-KL form gives
\begin{align*}
  I(\textbf{o}_{i,t};\, r | s_{i,t})
  &=
  \mathbb{E}_{r\mid s_{i,t}}
  \left[
  \mathrm{KL}\!\left(h_r\,\|\,\pi\right)
  \right] \\
  &=
  \mathbb{E}_{r\mid s_{i,t}}
  \sum_{v\in\mathcal{V}}
  h_r(v)
  \log
  \frac{h_r(v)}
  {\pi(v)}.
\end{align*}
Let $m(v)=\sum_r p(r\mid s_{i,t})h_r(v)$. Separating the two log terms gives
\begin{align*}
  I(\textbf{o}_{i,t};\, r | s_{i,t})
  &=
  -H(\textbf{o}_{i,t}\mid r,s_{i,t}) \\
  &\quad
  -
  \sum_{v\in\mathcal{V}}
  m(v)\log \pi(v).
\end{align*}
By marginalization,
\begin{align*}
  m(v)
  &=
  p(\textbf{o}_{i,t}=v\mid s_{i,t}) \\
  &=
  \pi_\theta(v\mid s_{i,t}).
\end{align*}
Therefore,
\begin{align*}
  I(\textbf{o}_{i,t};\, r | s_{i,t})
  &=
  H(\textbf{o}_{i,t}\mid s_{i,t})
  -
  H(\textbf{o}_{i,t}\mid r,s_{i,t}) \\
  &=
  \mathcal{H}_{i,t}
  -
  \underbrace{H(\textbf{o}_{i,t}\mid r,s_{i,t})}_{\geq\,0}
  \leq
  \mathcal{H}_{i,t}.
\end{align*}
The inequality follows from the non-negativity of conditional entropy~\cite{CoverThomas2006}. Equality holds if and only if the random next token $\textbf{o}_{i,t}$ is a deterministic function of $(r, s_{i,t})$---that is, knowing both the context and the final outcome fully determines the token choice.
\end{proof}

\paragraph{Interpretation for GRPO.}
Proposition~2 establishes that policy entropy $\mathcal{H}_{i,t}$ upper-bounds the CMI, the expected reward-conditioned distributional shift at a token position. Low-entropy tokens therefore leave little room for such a shift, whereas high-entropy tokens provide a larger capacity ceiling. This bound does not imply that high-entropy tokens necessarily carry credit; it only marks where a meaningful hindsight correction is possible. \modelname uses entropy as this training-free capacity proxy.

\paragraph{Diagnostic Readout: Entropy--Reward Association.}
\label{par:proxy_cmi}
Exact computation of $I(\textbf{o}_{i,t}; r | s_{i,t})$ requires the hindsight distribution $h_{\pi_\theta}$, but a diagnostic statistic can be computed from the training data.
Specifically, we measure the mutual information between the token entropy value $\mathcal{H}_{i,t}$ and the binary reward $r_i$:
\begin{equation*}
  \widehat{I}(\mathcal{H}_{i,t};\, r_i) = H(\mathcal{H}_{i,t}) - H(\mathcal{H}_{i,t} | r_i).
\end{equation*}
Note that $\mathcal{H}_{i,t} = H(\textbf{o}_{i,t}|s_{i,t})$ is a deterministic function of the state $s_{i,t}$ alone (it depends on the policy distribution, not on the sampled token).
Therefore, $\widehat{I}(\mathcal{H}_{i,t}; r_i)$ is not an estimator of the action-level CMI $I(\textbf{o}_{i,t}; r|s_{i,t})$; it is a dataset-level diagnostic of whether entropy regimes are associated with different outcomes.
Since $I(\textbf{o}_{i,t}; r|s_{i,t}) \leq \mathcal{H}_{i,t}$ (Proposition~2), low-entropy positions rule out large CMI regardless of the observed reward.
Conversely, large $\widehat{I}$ signals that entropy differs across reward outcomes, indicating regimes where the entropy ceiling is high enough to permit non-trivial correction.
The diagnostic therefore serves as a \textit{necessary-condition detector}: it identifies positions where large hindsight shifts \emph{cannot} reside (low entropy), while flagging high-entropy positions as \emph{candidates} for non-trivial correction.
This statistic can be directly computed from the entropy distributions of positive and negative samples (Figure~\ref{fig:ent_dis}).

The entropy distribution in Figure~\ref{fig:ent_dis} shows that in the \textit{low-entropy regime}, the distributions for positive and negative tokens are nearly identical, implying $H(\mathcal{H}_{i,t} | r_i) \approx H(\mathcal{H}_{i,t})$ and thus $\widehat{I} \approx 0$.
In the \textit{high-entropy tail}, a significant divergence emerges---negative samples exhibit higher density at high entropy---yielding $\widehat{I} \gg 0$.
This empirically supports the theoretical prediction: low-entropy tokens have limited capacity for reward-conditioned information, while high-entropy tokens are candidate locations where non-trivial correction may reside.

\subsection{From Information Theory to Gradient Dynamics}
\label{app:gradient_analysis}

\noindent\textit{Note on concurrent work.}
The gradient derivation below follows a standard analysis of the softmax-parameterized policy gradient.
A similar one-dimensional (reward-polarity) gradient analysis appears concurrently in~\citet{Zhu2025TheSE} and~\citet{NegativeGradientGRPO2025}.
Our contribution is the two-dimensional extension along both reward polarity \textit{and} token entropy, which yields the four-quadrant interpretation, explains how GRPO's group-normalized advantage treats low-capacity and candidate high-capacity positions uniformly, and connects the resulting behavior to the CMI bound (Proposition~\ref{prop:credit_bound}). A summary of the key result is presented in Section~\ref{sec:gradient_analysis}.

\vspace{4pt}
The CMI bound (Proposition~2) explains \textit{why} entropy bounds possible expected hindsight shifts.
The following analysis shows \textit{how} this manifests in the gradient dynamics of GRPO: the gradient magnitude for the sampled token is modulated by the confidence term $(1-\pi_{o_t})$, which is aligned in expectation with the entropy-based capacity view.

To investigate the token-level gradient dynamics, let $z_v$ denote the logit of a token $v$ in the vocabulary $\mathcal{V}$.
The probability of generating token $v$ at step $t$, denoted as $\pi_{v} := \pi_{\theta}(v|q, o_{<t})$, is given by the softmax function:
\begin{equation}
  \pi_{v}= \frac{e^{z_v}}{\sum_{k \in \mathcal{V}} e^{z_k}}.
\end{equation}

Consider the simplified GRPO loss for a sampled token $o_t$ at step $t$.
Omitting the clipping operation and the normalization constant $\frac{1}{\sum_{i=1}^G|o_i|}$ focuses on the core gradient dynamics. The simplified objective is:
\begin{equation}
  \begin{aligned}
  \mathcal{L}_t(\theta) = - \mathbb{E} \left[ \frac{\pi_\theta(o_t|q, o_{<t})}{\pi_{\text{old}}(o_t|q, o_{<t})} \hat{A}_t \right].
  \end{aligned}
\label{eq:grpo_simplified}
\end{equation}

The gradient of this loss with respect to the logit $z_v$ can be derived using the chain rule:
\begin{equation}
  \begin{aligned}
  \frac{\partial \mathcal{L}_t}{\partial z_v}& = \frac{\partial \mathcal{L}_t}{\partial \pi_{o_t}} \cdot \frac{\partial \pi_{o_t}}{\partial z_v}.
\end{aligned}
\end{equation}

First, the derivative of the linear importance sampling ratio with respect to the probability $\pi_{o_t}$ is:
\begin{equation}
  \frac{\partial \mathcal{L}_t}{\partial \pi_{o_t}} = - \frac{\hat{A}_t}{\pi_{\text{old}}(o_t)}.
\end{equation}

Second, the derivative of the softmax function $\pi_{o_t}$ with respect to the logit $z_v$ is given by:
\begin{equation}
  \begin{aligned}
\frac{\partial \pi_{o_t}}{\partial z_v}
& = \frac{\mathbb{I}(v=o_t)e^{z_{o_t}}\sum_{k \in \mathcal{V}} e^{z_k}-e^{z_{o_t}}e^{z_{v}}}{\left(\sum_{k \in \mathcal{V}} e^{z_k}\right)^2} \\
& =  \frac{e^{z_{o_t}} \cdot \left( \mathbb{I}(v=o_t)\sum_{k \in \mathcal{V}} e^{z_k}-e^{z_{v}} \right)}{\left(\sum_{k \in \mathcal{V}} e^{z_k}\right)\cdot \left(\sum_{k \in \mathcal{V}} e^{z_k}\right)} \\
& = \pi_{o_t} (\mathbb{I}(v = o_t) - \pi_v)\\
\end{aligned}
\end{equation}
where $\mathbb{I}(v = o_t)$ is the indicator function.

Combining these terms, we obtain the gradient with respect to the logit $z_v$:
\begin{equation}
  \begin{aligned}
  \frac{\partial \mathcal{L}_t}{\partial z_v} & = - \frac{\hat{A}_t}{\pi_{\text{old}}(o_t)} \cdot \pi_{o_t} (\mathbb{I}(v = o_t) - \pi_v).
  \end{aligned}
\label{eq:gradient_logit}
\end{equation}

Under the standard assumption that the policy has not diverged significantly from the old policy (i.e., $\pi_\theta \approx \pi_{old}$), the ratio $\frac{\pi_{o_t}}{\pi_{old}(o_t)} \approx 1$. For example, $\frac{\pi_{o_t}}{\pi_{old}(o_t)}$ is between $[0.8,1.28]$ in this paper. This yields the final gradient descent form:
\begin{equation}
  \begin{aligned}
-\nabla_{z_v} \mathcal{L}_t & \approx \hat{A}_t (\mathbb{I}(v = o_t) - \pi_v) \\
& = \begin{cases}
  \hat{A}_t \cdot (1 - \pi_{o_t}), & \text{if} \ v = o_t \\
  - \hat{A}_t \cdot \pi_v, & \text{if} \ v \neq o_t
\end{cases} \\
  \end{aligned}
\label{eq:final_gradient_logit}
\end{equation}
where $\mathbb{I}(v = o_t)$ is an indicator function that equals 1 if token $v$ is the sampled token $o_t$, and 0 otherwise.
Suppose a learning rate of $\eta$, the logit update for token $v$ is:
\begin{equation}
  z_v  \leftarrow z_v - \eta \cdot \nabla_{z_v} \mathcal{L}_t 
  \label{eq:z_update}
\end{equation}

This derivation reveals how sampled-token probability, which is correlated with entropy in expectation, modulates the gradient update:

\noindent\textbf{High-Entropy Quadrants (PHR \& NHR).} 
In expectation, high-entropy regimes tend to place lower probability on sampled tokens than near-deterministic regimes. Consequently, the scaling factor $(1 - \pi_{o_t})$ tends to be larger on average, increasing the expected gradient magnitude.
\begin{itemize}[leftmargin=*]
    \item \textbf{PHR ($\hat{A}_t > 0$):} The model receives a positive-polarity update to increase the probability of the sampled token $o_t$ while decreasing the logits of all other tokens following Equation~\ref{eq:z_update}. This update can consolidate a correct branch that was previously uncertain, sharpening the local output distribution. This helps explain why PHR improves both accuracy ($\text{Avg@}k$) and diversity ($\text{Pass@}k$), as shown in Figure~\ref{fig:decompose} and~\ref{fig:TT_performance}.
    \item \textbf{NHR ($\hat{A}_t < 0$):} The model receives a negative-polarity update. For the sampled token $o_t$, the logit is decreased. Crucially, for unsampled tokens $v \neq o_t$, the gradient update is positive because $- \nabla_{z_v}\mathcal{L}_t \propto - \hat{A}_t \pi_v$ (where $\hat{A}_t$ is negative). This increases the logits of unsampled tokens $z_v$ in Equation~\ref{eq:z_update}, redistributing probability mass to other plausible candidates and pruning erroneous branches. Figure~\ref{fig:decompose} shows that NHR not only encourages diversity (Pass@$32$) but also improves accuracy (Avg@$32$), as the model becomes less likely to diverge into incorrect reasoning paths.
\end{itemize}

\noindent\textbf{Low-Entropy Quadrants (PLR \& NLR).} 
Low entropy implies a peaked distribution where the model is confident in its choice ($\pi_{o_t} \approx 1$). Thus, the scaling gradient factor $(1 - \pi_{o_t})$ is relatively small.
\begin{itemize}[leftmargin=*]
    \item \textbf{PLR ($\hat{A}_t > 0$):} Since the model is already confident and correct, the output distribution is sharp. Further reinforcement has limited room to induce reward-conditioned redistribution and can instead encourage overconfidence or entropy collapse. This explains the stagnation of PLR performance observed in our experiments.
    \item \textbf{NLR ($\hat{A}_t < 0$):} Although the gradient magnitude is small, any non-zero update penalizes a token the model is confident in. In negative trajectories, low-entropy tokens often represent syntactically correct structures or robust logical transitions (e.g., "The answer is") rather than the root cause of the error. Penalizing these tokens with a negative trajectory-level advantage introduces noise, potentially degrading the model's linguistic capabilities without correcting the actual reasoning flaw. This accounts for why NLR is less effective than NHR in Figure~\ref{fig:decompose}.
\end{itemize}

\subsection{Connections to Related Theoretical Frameworks}
\label{app:theory_connections}

The CMI framework developed above offers a common capacity-based lens for interpreting several recent empirical findings.

\paragraph{Negative Reinforcement Effectiveness.}
\citet{Zhu2025TheSE} demonstrated the ``surprising effectiveness'' of negative reinforcement in LLM reasoning, showing that high-entropy negative-polarity updates can deliver strong learning effects.
The CMI framework offers a capacity-based interpretation: at high-entropy positions where $\mathcal{H}_{i,t}$ is large, the bound $I(\textbf{o}_{i,t}; r | s_{i,t}) \leq \mathcal{H}_{i,t}$ permits non-trivial reward-conditioned correction.
Negative feedback at these positions (NHR) can prune erroneous branches by redistributing probability mass away from incorrect tokens in the observed interventions, as shown in the gradient analysis (Equation~\ref{eq:final_gradient_logit}).
Conversely, negative feedback at low-entropy positions (NLR) targets tokens with limited capacity under the CMI bound, explaining why it can introduce noise rather than useful updates.
Recent analysis of negative gradient dynamics~\cite{NegativeGradientGRPO2025} further corroborates this view, showing that the gradient contribution of negative samples is concentrated on tokens where the policy is most uncertain---matching the high-capacity regime identified by Proposition~2.

\paragraph{Token Hidden Reward and Implicit Token Scoring.}
The Token Hidden Reward (THR) framework~\cite{TokenHiddenReward2025} identifies ``reward-carrying'' tokens through an importance-based scoring mechanism.
Remarkably, the set of tokens identified as high-THR overlaps with high-entropy tokens by over 90\%~\cite{TokenHiddenReward2025}.
Our CMI analysis contextualizes this empirical coincidence: because the CMI bound $I(\textbf{o}_{i,t}; r | s_{i,t}) \leq \mathcal{H}_{i,t}$ rules out large expected shifts at low-entropy positions, high-entropy positions remain candidates \textit{capable} of supporting substantial reward-conditioned information.
THR's learned scoring function implicitly discovers positions that often overlap with high-entropy regions; \modelname uses a training-free entropy-based proxy to target the same candidate regime without the overhead of learning an auxiliary reward model.

\paragraph{Learning Dynamics and Entropy Collapse.}
Studies on the learning dynamics of LLM fine-tuning~\cite{LearningDynamicsLLM2025} have identified a ``squeezing effect'' whereby PLR training progressively collapses the entropy of already-confident tokens, ultimately reducing the model's exploration capacity.
This phenomenon is consistent with the gradient structure in the low-entropy regime: as $\pi_{o_t} \to 1$, the gradient factor $(1 - \pi_{o_t}) \to 0$, creating a self-reinforcing pattern where confident tokens receive diminishing updates but remain in a peaked distribution.
From the CMI perspective, PLR operates in a regime where $\mathcal{H}_{i,t} \approx 0$, so the CMI upper bound implies near-zero capacity for reward-conditioned shifts regardless of the applied advantage magnitude.
This helps explain why PLR exhibits stagnating performance in our four-quadrant experiments (Figure~\ref{fig:decompose}): optimization effort is spent on tokens that have limited capacity for hindsight correction.

\begin{figure}[!h]
    \centering
    \includegraphics[width=\linewidth]{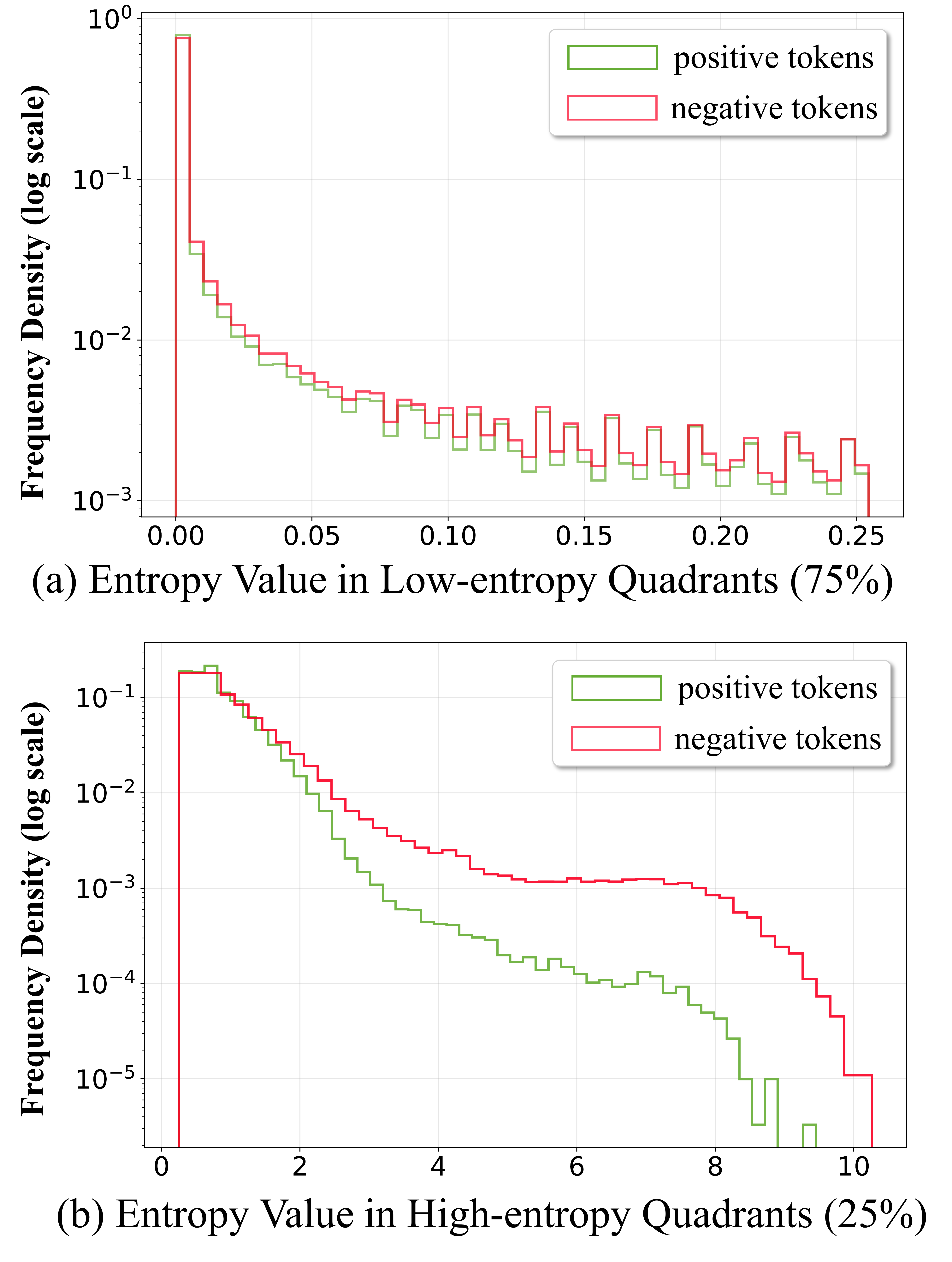}
    \caption{Token entropy distribution. Low-entropy distributions remain aligned, but a sharp divergence emerges between positive and negative samples in the high-entropy quadrant. }
    
    \label{fig:ent_dis}
\end{figure}

\begin{figure*}[!t]
  \centering
  \includegraphics[width=\linewidth]{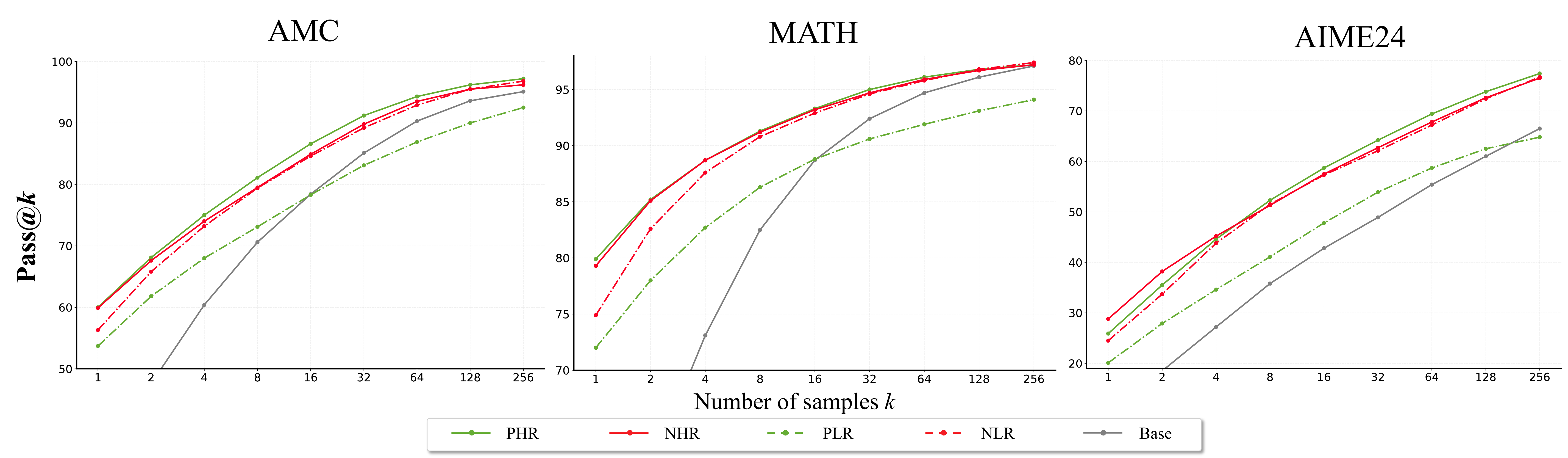}
  \caption{Test time compute performance of four-quadrant training objectives. The x-axis represents varying $k \in \{1, 2, 4, 8, 16, 32, 64, 128, 256\}$ values, covering the full spectrum of $\text{Pass@}k$. }
  \label{fig:TT_performance}
\end{figure*}

\section{Decomposition Training Results}
\label{app:decompose_exp}
\subsection{Training Entropy Distribution}

Figure~\ref{fig:ent_dis} visualizes the entropy distribution of tokens in positive and negative training rollouts.
Only a minority of tokens (approximately 25\%) exhibit high entropy.
In the low-entropy regime, the distributions for positive and negative tokens are nearly identical.
This overlap indicates that low-entropy tokens typically correspond to deterministic reasoning steps (e.g., syntax or factual recall) that remain stable regardless of the final outcome.
However, a significant divergence emerges in the high-entropy tail, where the density of negative tokens is consistently higher than that of positive tokens.
This pattern points to high-uncertainty states being more often associated with divergent or erroneous reasoning paths.
These empirical findings support the targeted, signed proxy strategy used by \modelname: rather than assigning the same update to all tokens in a trajectory, the objective consolidates successful high-entropy branches and redirects failed high-entropy branches.

\subsection{Test-time Compute Performance}
\label{app:tt_performance}
Evaluation uses the final optimization checkpoints for PHR, NHR, PLR, and NLR across varying $\text{Pass@}k$ values with decoding temperature set $1.0$.
In summary, PHR and NHR achieve robust performance across the full range of $k$.
The test-time compute results on AMC, MATH, and AIME24 (Figure~\ref{fig:TT_performance}) support the following observations:
\begin{itemize}[leftmargin=*, itemsep=0pt]
  \item Across the $\text{Pass@}k$ spectrum ($k=1$ to $256$), PHR and NHR generally outperform both the Base model and the low-entropy variants. This supports the view that signed updates at high-uncertainty positions are the main observed locus of sustained reasoning improvements in these diagnostics.
  \item PLR shows initial gains over the Base model at low $k$ values (e.g., $k=1, 2$), but its performance scaling is significantly flatter. In AIME24, the Base model eventually surpasses PLR as $k$ increases. This trend is even more pronounced in less challenging datasets such as AMC and MATH, indicating that reinforcing deterministic, low-entropy tokens leads to over-exploitation and reduces output diversity.
  \item In line with~\citet{Zhu2025TheSE}, NHR alone is remarkably effective: it substantially outperforms NLR across the full $\text{Pass@}k$ spectrum and approaches the performance of the complete GRPO baseline at high $k$. This pattern indicates that penalizing incorrect high-entropy tokens can prune erroneous reasoning branches and redistribute probability mass toward plausible alternatives, providing a more targeted update than penalizing low-entropy tokens.
  \item NLR-only training yields lower scores at low $\text{Pass@}k$ (e.g., $k=1, 2$) compared to NHR. This indicates that penalizing low-entropy tokens in incorrect responses provides a noisy or redundant update. These tokens (e.g., mathematical syntax such as \texttt{\textbackslash frac} or common suffixes) are typically deterministic and correct even in failed trajectories. Penalizing them risks destabilizing the model's linguistic capabilities without correcting the underlying logical errors.
\end{itemize}
\section{Additional Analyses}

\subsection{Entropy Dynamics of \modelname}
\label{app:entropy_dynamics}

\begin{figure}[h]
  \centering
    \includegraphics[width=\linewidth]{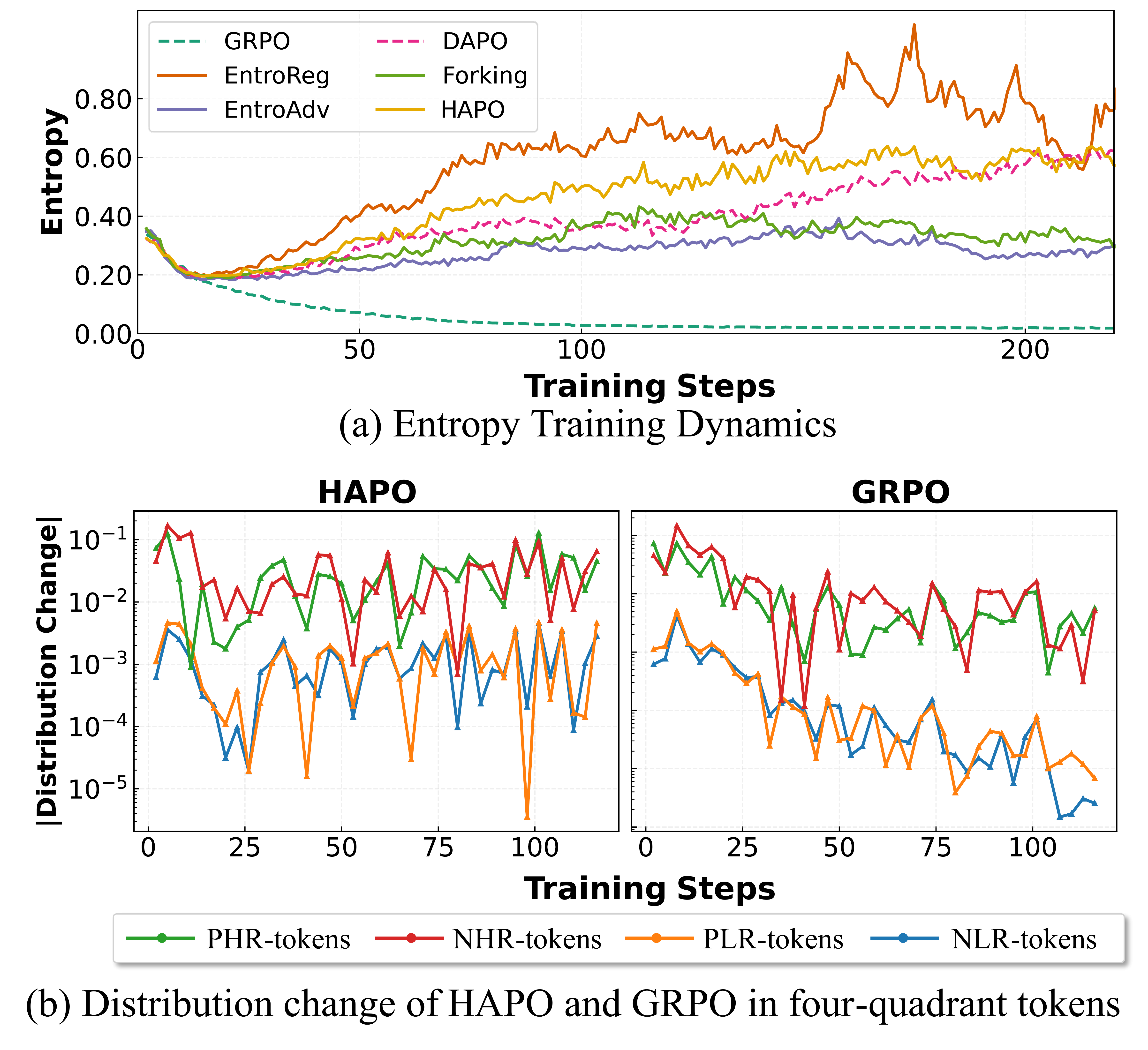}
    \vspace{-0.8cm}
  \caption{Entropy dynamics analysis of 7B models.}
  \label{fig:ent_compare}
\end{figure}

Figure~\ref{fig:ent_compare} shows that \modelname maintains a stable entropy trajectory throughout training.
In contrast, GRPO suffers from entropy collapse, which constrains exploratory capacity, while EntroReg tends to over-flatten the output distribution.
The analysis also tracks absolute entropy shifts across the four quadrants.
The largest observed entropy shifts occur in the high-entropy quadrants (PHR and NHR), with low-entropy quadrants remaining comparatively stable. This pattern accords with \modelname concentrating larger update effects on uncertain segments while preserving confidence in established predictions.

\subsection{Efficiency Analysis}
\label{app:efficiency_additional}

If \modelname indeed concentrates updates on consequential decision points, it should reach a better accuracy--cost frontier than methods that rely on uniform advantage assignment or longer generations.
Figure~\ref{fig:effi_compare} in the main text gives the AIME24 view: \modelname converges more smoothly than GRPO and DAPO and avoids the over-generation pattern of EntroAdv, yielding a better accuracy--cost trade-off.
Table~\ref{tab:efficiency_additional} extends the same comparison to MATH and Olympiad.
Across both benchmarks, \modelname achieves the highest accuracy among all methods.
Compared to GRPO, DAPO, and Forking, it attains meaningfully higher Avg@4 at a moderate increase in generation length; compared to EntroAdv, it reaches comparable or better accuracy with fewer tokens, indicating a more favorable accuracy--cost trade-off.

\begin{table*}[t]
\centering
\setlength{\tabcolsep}{5pt}
\begin{tabular}{lcccc}
\toprule
\textbf{Method} & \textbf{MATH Avg@4} & \textbf{MATH Tokens} & \textbf{Olympiad Avg@4} & \textbf{Olympiad Tokens} \\
\midrule
GRPO & 88.9 & 3172 & 58.7 & 5721 \\
DAPO & 89.9 & 2690 & 58.4 & 5113 \\
Forking & 87.6 & 2769 & 59.1 & 5193 \\
KL-Cov & 90.0 & 3396 & 59.9 & 6629 \\
EntroAdv & 90.2 & 3963 & 59.1 & 7752 \\
\modelname & \textbf{91.1} & 3569 & \textbf{60.8} & 7353 \\
\bottomrule
\end{tabular}
\caption{Extended efficiency evidence on MATH and Olympiad for \texttt{DeepScaleR-1.5B-8k}. Higher Avg@4 is better; lower token counts indicate lower inference cost. The table is derived from the additional rebuttal measurements used to corroborate the AIME24 Pareto analysis.}
\label{tab:efficiency_additional}
\end{table*}

\subsection{Impact of Coefficient \texorpdfstring{$\alpha$}{alpha}}
\label{app:hyperparameter_full}

\begin{figure*}[!ht]
  \centering
  \includegraphics[width=\linewidth]{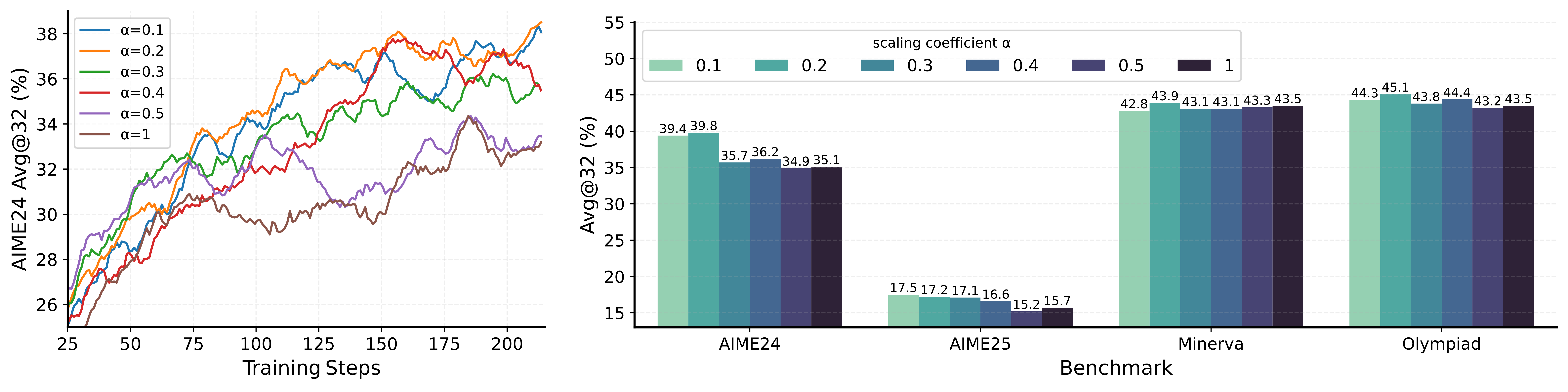}
  \caption{Hyperparameter analysis for $\alpha \in \{0.1, 0.2, 0.3, 0.4, 0.5, 1\}$ on 7B models. \textit{Left}: training efficiency comparison. \textit{Right}: evaluation on various benchmarks.}
  \label{fig:hyper_combine}
\end{figure*}

Table~\ref{tab:full_para} examines the sensitivity of \modelname to the hyperparameter $\alpha$.
Figure~\ref{fig:hyper_combine} shows that training remains robust for $\alpha \leq 0.4$; excessively large values slow convergence and degrade benchmark performance.
Across both the trajectory plots and the benchmark summary, $\alpha=0.2$ gives the strongest overall trade-off among the tested values, so it is the default setting in the main experiments. The clipping range remains fixed at $\varphi=2$ throughout these experiments, and a full $\varphi$ sweep is left to future work.
\begin{table}[h]
\resizebox{1\linewidth}{!}{
\begin{tabular}{lccccccc}
\toprule
$\alpha$ & AIME24 & AIME25 & AMC  & MATH & Minerva & Olympiad & avg  \\ \hline
1      & 35.1   & 15.7   & 63.5 & 84.3 & 43.5    & 43.5     & 47.6  \\
0.5    & 34.9   & 15.2   & 62.9 & 83.5 & 43.3    & 43.2     & 47.1  \\
0.4    & 36.2   & 16.6   & 62.7 & 83.8 & 43.1    & 44.4     & 47.8  \\
0.3    & 35.7   & 17.1   & 61.8 & 82.8 & 43.1    & 43.8     & 47.3  \\
0.2    & 39.8   & 17.2   & 62.1 & 83.7 & 43.9    & 45.1     & \textbf{48.6}  \\
0.1    & 39.4   & 17.5   & 61.2 & 83.1 & 42.8    & 44.3     & \underline{48.0}\\
\bottomrule
\end{tabular}
}
\caption{$\text{Avg@}k$ results on different benchmarks with 7B model. \textbf{Bold} and \underline{underline} numbers denote the best and second-best results.}
\label{tab:full_para}
\end{table}

\section{MMLU-Pro Results}
\label{app:mmlu_pro}

Table~\ref{tab:mmlu_pro} presents the per-domain results on MMLU-Pro. Among the compared models, \modelname achieves the highest overall accuracy (36.6\%), with particularly strong gains in reasoning-intensive domains such as economics (+7.7\% over GRPO), biology (+11.4\%), and psychology (+5.1\%).
Domains where performance is comparable to GRPO (e.g., health, chemistry) may rely more on factual retrieval than multi-step reasoning, in line with the possibility that the capacity-guided mechanism primarily benefits high-entropy decision points.

\begin{table}[h]
\centering
\resizebox{1\linewidth}{!}{
\begin{tabular}{lccc}
\toprule
\textbf{Domain} & \textbf{Base} & \textbf{GRPO} & \textbf{\modelname} \\
\midrule
Overall (Mean) & 18.2 & 33.1 & \textbf{36.6} \\
\midrule
Computer Science & 31.0 & 41.0 & \textbf{43.2} \\
Math & 32.9 & 34.1 & \textbf{37.9} \\
Chemistry & 28.9 & \textbf{30.4} & 29.2 \\
Engineering & 16.4 & \textbf{29.8} & \textbf{29.8} \\
Law & 7.1 & 12.8 & \textbf{16.2} \\
Biology & 21.6 & 44.1 & \textbf{55.5} \\
Health & 10.6 & \textbf{25.2} & 24.7 \\
Physics & 21.7 & 36.7 & \textbf{41.4} \\
Business & 24.0 & 47.9 & \textbf{50.1} \\
Philosophy & 13.0 & 25.7 & \textbf{31.3} \\
Economics & 7.4 & 48.2 & \textbf{55.9} \\
Other & 11.8 & 29.8 & \textbf{32.6} \\
Psychology & 8.4 & 40.5 & \textbf{45.6} \\
History & 10.8 & 16.8 & \textbf{24.2} \\
\bottomrule
\end{tabular}
}
\caption{Per-domain accuracy (\%) on MMLU-Pro with \texttt{Qwen2.5-Math-7B}. Best and tied-best results are in \textbf{bold}.}
\label{tab:mmlu_pro}
\end{table}

\section{Advantage Allocation Analysis}
\label{app:credit_assignment_analysis}
\begin{figure}[h!]
  \centering
  \includegraphics[width=\linewidth]{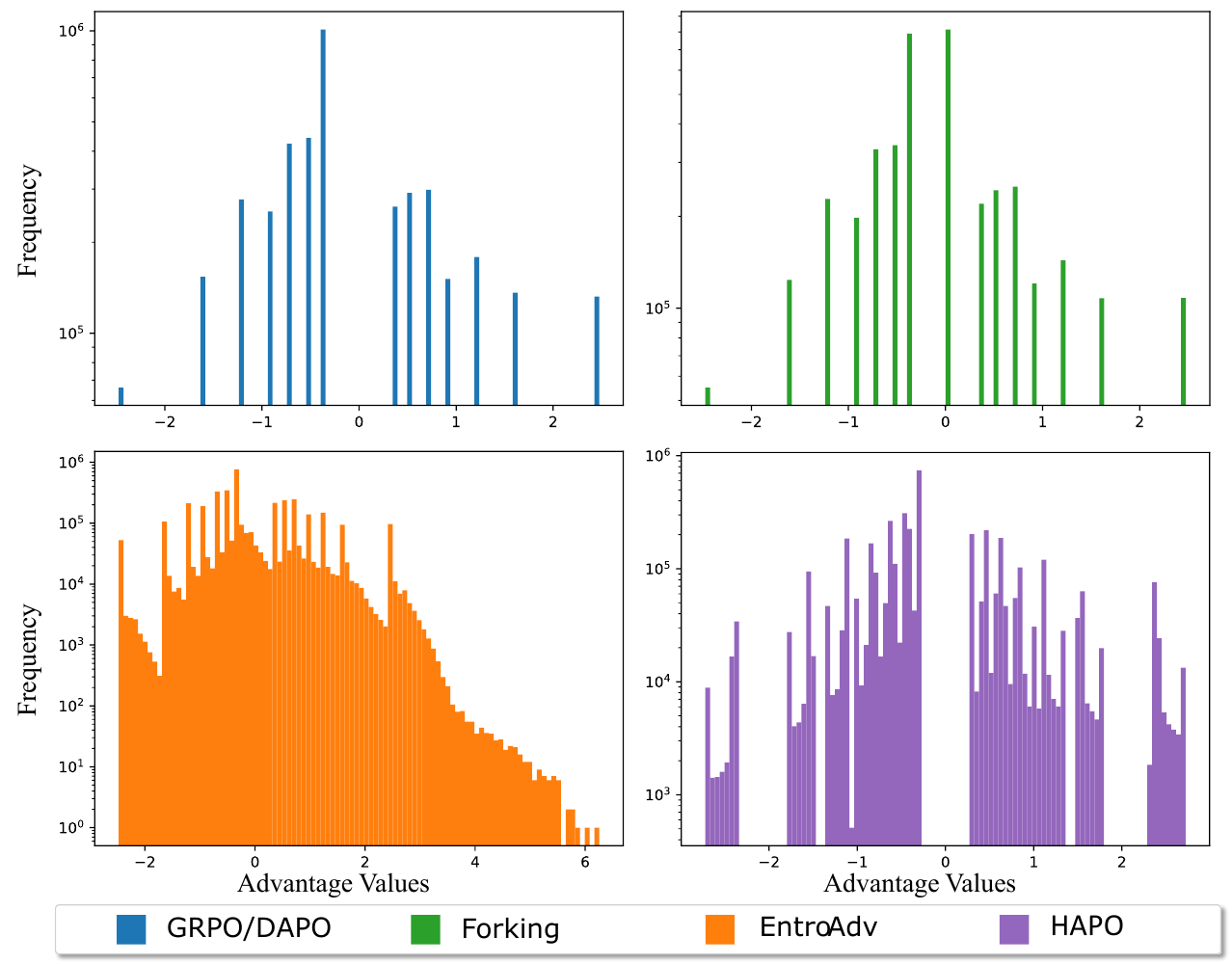}
  \caption{Advantage value distributions. Comparison of token-level advantage allocation among different methods.}
  \label{fig:credit_assignment_comparison}
\end{figure}

This section compares our advantage allocation strategy with three strong baselines: DAPO~\cite{DBLP:journals/corr/abs-2503-14476}, EntroAdv~\cite{DBLP:journals/corr/abs-2506-14758}, and Forking~\cite{DBLP:journals/corr/abs-2506-01939}.
Figure~\ref{fig:credit_assignment_comparison} visualizes the token-level advantage distributions assigned by each method.
Briefly, DAPO (and GRPO) adopts a group-based reward assignment that treats all tokens within a trajectory equally, disregarding their local uncertainty and capacity differences.
EntroAdv modifies the advantage function by adding a gradient-detached entropy term, explicitly encouraging exploration at high-uncertainty positions.
Forking implements a hard selection mechanism, restricting policy gradient updates to the top 20\% of tokens with the highest entropy, under the assumption that these forking tokens are useful high-uncertainty candidates.

Figure~\ref{fig:credit_assignment_comparison} shows that GRPO/DAPO assigns uniform advantages, failing to distinguish candidate high-capacity branching points from routine tokens.
Forking improves upon this by filtering out low-entropy tokens, yet its binary masking remains a coarse-grained approximation.
EntroAdv introduces continuous entropy-based shaping, but its additive entropy bonus does not distinguish reward polarity and lacks a distinct mechanism for handling negative feedback.
In contrast, our method implements a continuous, four-quadrant modulation.
It modulates candidate high-capacity tokens by increasing positive feedback at successful branching points (PHR) and increasing negative feedback at failed branching points (NHR), while dampening the signal for low-entropy tokens to prevent over-exploitation and noise.
This targeted approach concentrates update magnitude on branching points rather than routine continuations.

\section{Empirical Observation}
\label{app:example_output}
This section examines the responses of \texttt{Qwen2.5-Math-7B} by visualizing the training outputs with token entropy highlighted.
\begin{figure}[!th]
  \centering
  \includegraphics[width=\linewidth]{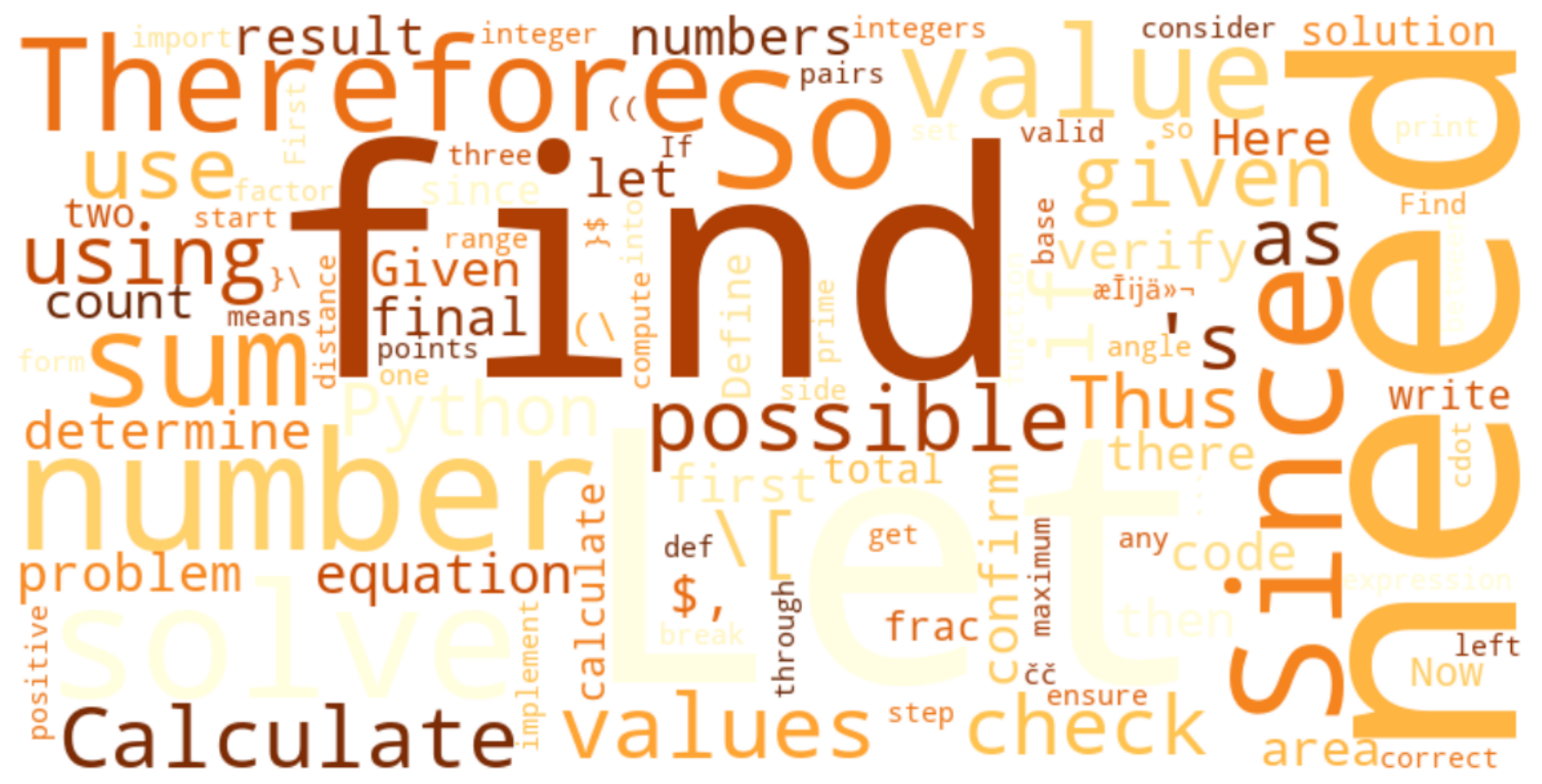}
  \caption{High-entropy token word cloud.}
  \label{fig:high-ent_wordcloud}
  \includegraphics[width=\linewidth]{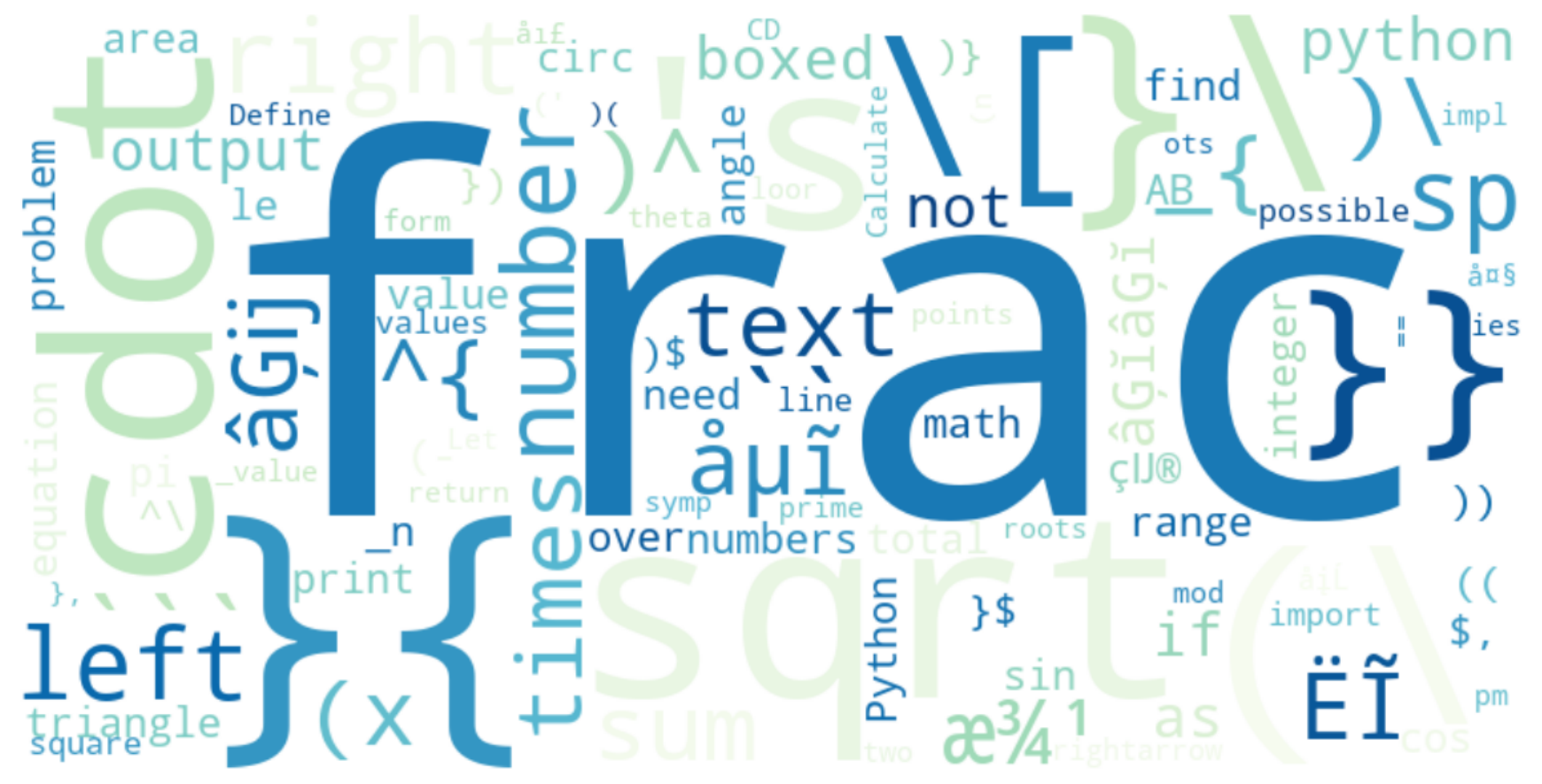}
  \caption{Low-entropy token word cloud.}
  \label{fig:low-ent_wordcloud}
\end{figure}

The visualization samples over 4k responses in the training set and collects over $10^6$ tokens.
The statistical analysis of the entropy values of tokens is presented in word clouds in Figure~\ref{fig:high-ent_wordcloud} and~\ref{fig:low-ent_wordcloud}, where a larger font size indicates a higher average token entropy.
The visualization aligns with prior works~\cite{DBLP:journals/corr/abs-2506-01939,DBLP:journals/corr/abs-2506-14758}: high-entropy tokens often coincide with candidate forks in reasoning trajectories, while low-entropy tokens often facilitate the stable execution of logic along the selected path.
Specifically, high-entropy tokens frequently correspond to logical connectors such as "since," "therefore," and "thus," which bridge reasoning steps. Similarly, in mathematical contexts, tokens like "need," "given," and "let" serve to introduce assumptions or definitions. By contrast, low-entropy tokens typically include word suffixes, code snippets, or components of mathematical formulas, all of which are characterized by high structural determinism.
However, existing works primarily target a binary classification of tokens into high and low entropy categories.
A further question is whether entropy levels correspond to consistent model behaviors, regardless of correctness.
For example, do low-entropy tokens usually indicate overconfident mistakes or robust reasoning steps in incorrect responses?

To this end, the analysis visualizes two representative examples in Figure~\ref{fig:example_output}, where token entropy is highlighted separately in incorrect and correct reasoning.
The visualization reveals similar patterns.
Despite the final correctness, high-entropy tokens often coincide with candidate decision points along the reasoning trajectory, while low-entropy tokens often execute deterministic steps along the established path.
Although some low-entropy tokens exhibit overconfident hallucinations, the overall trend persists~\cite{DBLP:conf/iclr/0026L0GWTFY24,DBLP:conf/iclr/XiongHLLFHH24}.

\begin{figure*}[hp]
  \centering
  \includegraphics[width=\linewidth]{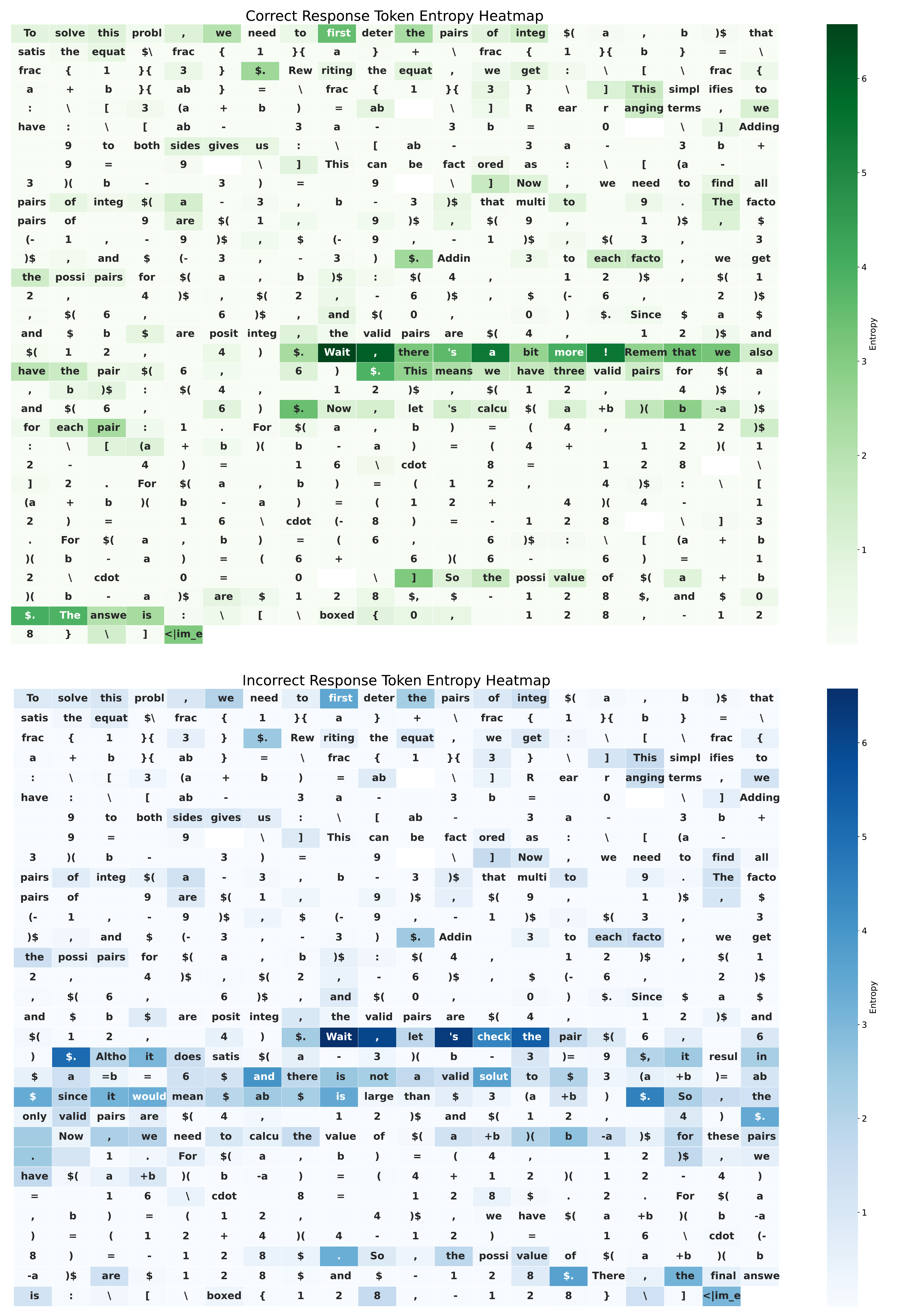}
  \caption{Visualization of token entropy in incorrect and correct responses.}
  \label{fig:example_output}
\end{figure*}

\end{document}